\documentclass[11pt,a4paper,twocolumn,twosided,rotate]{IEEEtran} 
\usepackage{pstricks,pst-node,pst-tree,pstricks-add} 
\usepackage{amsopn}
\usepackage{amsthm}
\usepackage{spconf}
\usepackage{endnotes}
\usepackage{hyperref}
\usepackage{epsfig,psfrag}
\usepackage{pst-all}
\usepackage{amssymb,amsfonts,upref,cite,epsf,color,bm}
\usepackage{graphicx}
\usepackage{color}
\usepackage{amsmath}
\usepackage{graphicx}
\usepackage{calc}
\usepackage{booktabs}
\usepackage{tikz}
\usepackage{pgfplots}
\newcommand\defeq{:=}

\usepackage{algorithm}
\usepackage{algpseudocode}
\floatname{algorithm}{Algorithm}
\algnewcommand\algorithmicinput{\textbf{Input:}}
\algnewcommand\INPUT{\item[\algorithmicinput]}
\algnewcommand\algorithmicoutput{\textbf{Output:}}
\algnewcommand\OUTPUT{\item[\algorithmicoutput]}

\DeclareMathOperator*{\argmin}{arg\;min}

\newcommand\vect[1]{\mathbf #1}

\newcommand{\vu}{\vect{u}}

\newcommand{\vx}{\mathbf{x}}  
  
\newcommand{\vz}{z}

\newcommand{\mL}{\mathbf{L}}
\newcommand{\mW}{\mathbf{W}}
\newcommand{\measlen}{M}

\newcommand{\signalsize}{N}
\newcommand{\noise}{e}
\newcommand{\siglen}{N}

\newcommand{\graphsigs}{\mathbb{R}^{\mathcal{V}}}

\newcommand{\edges}{\mathcal{E}}
\newcommand{\cluster}{\mathcal{C}}
\newcommand{\nodes}{\mathcal{V}}
\newcommand{\graph}{\mathcal{G}}
\newcommand{\samplingset}{\mathcal{M}}

\newcommand{\edgeset}{\mathcal{S}}

\newcommand{\flow}{h}
\newcommand{\partition}{\mathcal{F}}
\newcommand{\boundary}{\partial \partition}
\newcommand{\xsig}{x[\cdot]}
\newcommand{\xsigval}[1]{x[{#1}]}

\newcommand{\estxsig}{\hat{x}[\cdot]}
\newcommand{\estxsigval}[1]{\hat{x}[{#1}]}

\newtheorem{theorem}{Theorem}
\newtheorem{definition}[theorem]{Definition}
\newtheorem{lemma}[theorem]{Lemma}


 

\usepackage{setspace}

\begin{document}

\title{When is Network Lasso Accurate?}
\name{Alexander Jung and Nguyen Tran Quang and Alexandru Mara}
\address{{\normalsize Department of Computer Science, Aalto University, Finland; firstname.lastname(at)aalto.fi}\\[-0.5mm]
\thanks{Parts of the work underlying this paper have been presented in \cite{MaraAsilomar2017}. 
A preprint of this manuscript is available under \url{https://arxiv.org/abs/1704.02107} \cite{WhenIsNLASSO}.}
}

\maketitle

\begin{abstract}
The ``least absolute shrinkage and selection operator'' (Lasso) method has been adapted 
recently for network-structured datasets. In particular, this network Lasso method allows to learn 
graph signals from a small number of noisy signal samples by using the total 
variation of a graph signal for regularization. While efficient and scalable implementations of the network 
Lasso are available, only little is known about the conditions on the 
underlying network structure which ensure network Lasso to be accurate. 
By leveraging concepts of compressed sensing, we address this gap and derive 
precise conditions on the underlying network topology and sampling set 
which guarantee the network Lasso for a particular loss function to deliver an accurate estimate of 
the entire underlying graph signal. We also quantify the error incurred by network Lasso in terms 
of two constants which reflect the connectivity of the sampled nodes.  
\end{abstract}

\begin{keywords} 
compressed sensing, 
big data, 
semi-supervised learning, 
complex networks, 
convex optimization, 
clustering
\end{keywords} 

\section{Introduction}
 \label{sec_intro}
  
In many applications ranging from image processing, social networks to bioinformatics, the observed 
datasets carry an intrinsic network structure. Such datasets can be represented conveniently by 
signals defined over a ``data graph'' which models the network structure inherent to the dataset \cite{SandrMoura2014,Chen2015}. 
The nodes of this data graph represent individual data points which are labeled by some quantity 
of interest, e.g., the class membership in a classification problem. We represent this  
label information as a graph signal whose value for a particular node is given by its label 
\cite{BishopBook,SemiSupervisedBook,zhou2004regularization,SandrMoura2014,Gadde2014,AndoZhang,Belkin2006}. 
This graph signal representation of datasets allows to apply efficient methods from graph signal processing (GSP) which 
are obtained, in turn, by extending established methods (e.g., fast filtering and transforms) from discrete time signal 
processing (over chain graphs) to arbitrary graphs \cite{SandrMoura2014b,shuman2013,NarangGadde2013}. 

The resulting graph signals are typically clustered, i.e., these signals are nearly constant over well 
connected subset of nodes (clusters) in the data graph. Exploiting this clustering property enables the accurate 
recovery of graph signals from few noisy samples. In particular, using the total variation to measure how well 
a graph signal conforms with the underlying cluster structure, the authors of \cite{NetworkLasso} obtain the network Lasso (nLasso) by adapting the 
well-known Lasso estimator which is widely used for learning sparse models \cite{hastie01statisticallearning,HastieWainwrightBook}. 
The nLasso can be interpreted as an instance of the regularized empirical risk minimization principle, using total variation of a graph signal for the regularization. 
Some applications where the use of nLasso based methods has proven beneficial include  housing price prediction and personalised medicine \cite{NetworkLasso,Yamada2017}

A scalable implementation of the nLasso has been obtained via the alternating direction 
method of multipliers (ADMM) \cite{DistrOptStatistLearningADMM}. However, the authors of \cite{DistrOptStatistLearningADMM} do not 
discuss conditions on the underlying network structure which ensure success of the network Lasso. 
We close this gap in the understanding of the performance of network Lasso, by deriving sufficient 
conditions on the data graph (cluster) structure and sampling set such that nLasso is accurate. 
To this end, we introduce a simple model for clustered graph signals which are constant 
over well connected groups or clusters of nodes. We then define the notion of resolving 
sampling sets, which relates the cluster structure of the data graph to the sampling set. 
Our main contribution is an upper bound on the estimation error obtained from nLasso when applied 
to resolving sampling sets. This upper bound depends on two numerical parameters which quantify 
the connectivity between sampled nodes and cluster boundaries. 

Much of the existing work on recovery conditions and methods for graph signal recovery 
(e.g., \cite{Romero2017,Tsitsvero2016,ChenVarmaSandKov2015,ChenVarama2016,Segarra2016,WangLiuGu2015}), 
relies on spectral properties of the data graph Laplacian matrix. 
In contrast, our approach is based directly on the connectivity properties of the underlying network structure. 
The closest to our work is \cite{SharpnackJMLR2012,TrendGraph}, 
which provide sufficient conditions such that a special case of the nLasso 
(referred to as the  ``edge Lasso'') accurately recovers piece-wise constant (or clustered) graph signals 
from noisy observations. However, these works require access to fully labeled 
datasets, while we consider datasets which are only partially labeled (as it is 
typical for machine learning applications where label information is costly). 

{\bf Outline.} The problem setting considered is formalized in Section \ref{sec_setup}. 
In particular, we show how to formulate the problem of learning a clustered graph signal 
from a small amount of signal samples as a convex optimization problem, which is underlying the nLasso method. 
Our main result, i.e., an upper bound on the estimation error of nLasso is stated in Section \ref{sec_main_results}. 
Numerical experiments which illustrate our theoretical findings are discussed in Section \ref{secNumerical}. 

{\bf Notation.} We will conform to standard notation of linear algebra as used, e.g., in \cite{golub96}. 
For a binary variable $b$, we denote its negation as $\bar{b}$. 
 
\section{Problem Formulation}
\label{sec_setup}

We consider datasets which are represented by a network model, i.e., a data graph $\graph\!=\!(\nodes,\edges,\mathbf{W})$ with 
node set $\nodes = \{1,\ldots,\siglen\}$, edge set $\edges$ and weight matrix $\mathbf{W} \in \mathbb{R}_{+}^{\siglen \times \siglen}$. 
The nodes $\nodes$ of the data graph represent individual data points. For example, the node $i \!\in\! \nodes$ 
might represent a (super-)pixel in image processing, a neuron of a neural network \cite{Goodfellow-et-al-2016} or a social network user profile \cite{BigDataNetworksBook}. 

Many applications naturally suggest a notion of similarity between individual 
data points, e.g., the profiles of befriended social network users or greyscale 
values of neighbouring image pixels. 
These domain-specific notions of similarity are represented by the 
edges of the data graph $\graph$, i.e., the nodes $i,j\!\in\!\nodes$ representing similar 
data points are connected by an undirected edge $\{i,j\}\!\in\!\edges$. We denote 
the neighbourhood of the node $i\in \nodes$ by $\mathcal{N}(i) \defeq \{ j \in \nodes: \{i,j\} \in \edges \}$. 
It will be convenient to associate with each undirected edge $\{i,j\}$ a pair of directed edges, i.e., 
$(i,j)$ and $(j,i)$. With slight abuse of notation we will treat the elements of the edge set $\edges$ 
either as undirected edges $\{i,j\}$ or as pairs of two directed edges $(i,j)$ and $(j,i)$. 

In some applications it is possible to quantify the extent to which data points 
are similar, e.g., via the physical distance between neighbouring sensors in a wireless 
sensor network application \cite{ZhuRabbat2012}. 
Given two similar data points $i,j\!\in\!\nodes$, which are connected by an edge $\{i,j\} \in \edges$, 
we will quantify the strength of their connection by the edge weight $W_{i,j}\!>\!0$ which we collect 
in the symmetric weight matrix $\mathbf{W} \in \mathbb{R}_{+}^{\signalsize \times \signalsize}$. The 
absence of an edge between nodes $i,j \in \nodes$ is encoded by a zero weight $W_{i,j}\!=\!0$. 
Thus the edge structure of the data graph $\graph$ is fully specified by the support (locations of 
the non-zero entries) of the weight matrix $\mathbf{W}$. 


\subsection{Graph Signals}
\label{sec_graph_signals}

Beside the network structure, encoded in the data graph $\graph$, datasets typically also contain 
additional labeling information. 
We represent this additional label information by a graph signal defined over $\graph$. 
A graph signal $\xsig$ is a mapping $\nodes \rightarrow \mathbb{R}$, which associates every node 
$i\!\in\!\nodes$ with the signal value $\xsigval{i} \!\in\! \mathbb{R}$ (which might representing a label characterizing 
the data point). We denote the set of all graph signals defined over a graph $\graph=(\nodes,\edges,\mathbf{W})$ by 
$\graphsigs$. 

Many machine learning methods for network structured data rely on a ``cluster hypothesis'' \cite{SemiSupervisedBook}. 
In particular, we assume the graph signals $\xsig$ representing the label information of a dataset conforms 
with the cluster structure of the underlying data graph. Thus, any two nodes $i,j \in \nodes$ out of a well-connected 
region (``cluster'') of the data graph tend to have similar signal values, i.e., $\xsigval{i} \approx \xsigval{j}$. 
Two important application domains where this cluster hypothesis has been applied successfully are 
digital signal processing where time samples at adjacent time instants are strongly correlated for sufficiently 
high sampling rate (cf.\ Fig.\ \ref{fig_graph_signals}-(a)) as well as processing of natural images whose close-by 
pixels tend to be coloured likely (cf.\ Fig.\ \ref{fig_graph_signals}-(b)). The cluster hypothesis is verified also often 
in social networks where the clusters are cliques of individuals having similar properties (cf.\ Fig.\ \ref{fig_graph_signals}-(c) and \cite[Chap. 3]{NewmannBook}). 


In what follows, we quantify the extend to which a graph signal $\xsig \in \graphsigs$  conforms with the clustering structure 
of the data graph $\graph=(\nodes,\edges,\mathbf{W})$ using its \emph{total variation} (TV)
\begin{equation} 
\label{equ_def_TV}
\| \xsig \|_{\rm TV} \defeq \sum_{\{i,j\} \in \edges} W_{i,j}  | \xsigval{j}\!-\!\xsigval{i}|. 
\end{equation} 
For a subset of edges $\edgeset \subseteq \edges$, we use 
the shorthand 
\begin{equation}
\label{equ_def_shorthand_TV}
\| \xsig \|_{\edgeset} \defeq \sum_{\{i,j\} \in \edgeset} W_{i,j}  | \xsigval{j}\!-\!\xsigval{i}| . 
\end{equation}

For a supervised machine learning application, the signal values $\xsigval{i}$ might represent 
class membership in a classification problem or the target (output) value in a regression problem. 
For the house price example considered in \cite{NetworkLasso}, the vector-valued graph signal $\vx[i]$ corresponds to 
a regression weight vector for a local pricing model (used for the house market in a limited geographical 
area represented by the node $i$). 
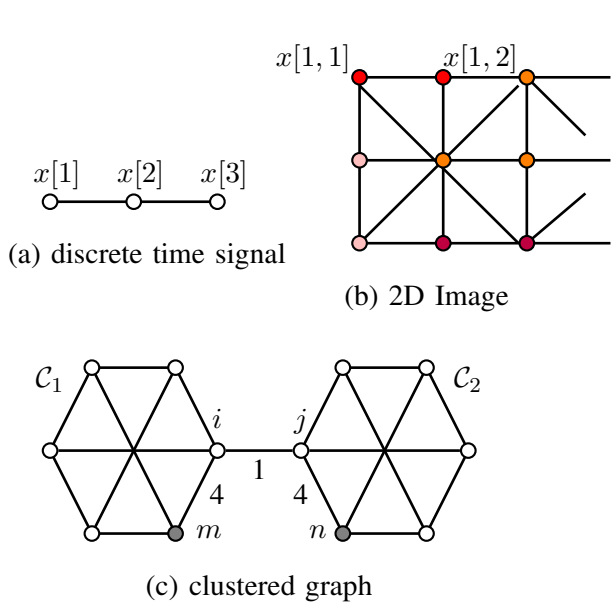
\begin{figure}
		\begin{pspicture}(-1,-3.8)(5.0,3.2)
		\psset{unit=1.1cm}
		\rput[tl](-0.2,1.5){$x[1]$}
		\rput[tl](0.8,1.5){$x[2]$}
		\rput[tl](1.8,1.5){$x[3]$}
		\psline[linewidth=1pt]{}(0,1)(1,1)(2,1)
		\pscircle[fillstyle=solid, fillcolor=white](1,1){0.1}
		\pscircle[fillstyle=solid, fillcolor=white](0,1){0.1}
		\pscircle[fillstyle=solid, fillcolor=white](2,1){0.1}
		\rput[tl](-0.5,0.5){(a) discrete time signal}
		
		
		\rput[tl](2.7,2.9){$x[1,1]$}
		\rput[tl](4.7,2.9){$x[1,2]$}
		\psline[linewidth=1pt]{}(3.7,2.5)(4.7,2.5)(5.7,2.5)(6.7,2.5)
		\psline[linewidth=1pt]{}(3.7,1.5)(4.7,1.5)(5.7,1.5)(6.7,1.5)
		\psline[linewidth=1pt]{}(3.7,0.5)(4.7,0.5)(5.7,0.5)(6.7,0.5)
		\psline[linewidth=1pt]{}(3.7,2.5)(3.7,1.5)(3.7,0.5)
		\psline[linewidth=1pt]{}(4.7,2.5)(4.7,1.5)(4.7,0.5)
		\psline[linewidth=1pt]{}(5.7,2.5)(5.7,1.5)(5.7,0.5)
		\psline[linewidth=1pt]{}(3.7,2.4)(5.6,0.5)
		\psline[linewidth=1pt]{}(3.7,0.5)(5.6,2.4)	
		
		\psline[linewidth=1pt]{}(5.7,2.5)(6.4,1.8)
		\psline[linewidth=1pt]{}(5.7,0.5)(6.4,1.1)	
		
		\pscircle[fillstyle=solid, fillcolor=red](3.7,2.5){0.1}
		\pscircle[fillstyle=solid, fillcolor=red](4.7,2.5){0.1}
		\pscircle[fillstyle=solid, fillcolor=orange](5.7,2.5){0.1}
		
		\pscircle[fillstyle=solid, fillcolor=pink](3.7,1.5){0.1}
		\pscircle[fillstyle=solid, fillcolor=orange](4.7,1.5){0.1}
		\pscircle[fillstyle=solid, fillcolor=orange](5.7,1.5){0.1}
		
		\pscircle[fillstyle=solid, fillcolor=pink](3.7,0.5){0.1}
		\pscircle[fillstyle=solid, fillcolor=purple](4.7,0.5){0.1}
		\pscircle[fillstyle=solid, fillcolor=purple](5.7,0.5){0.1}
		
		\rput[t](4.5,0.0){(b) 2D Image}
		
		
		\psline[linewidth=1pt]{}(0.5,-1.0)(1.5,-1.0)
		\psline[linewidth=1pt]{}(0.5,-3.0)(1.5,-3.0)		
		\psline[linewidth=1pt]{}(3.5,-1.0)(4.5,-1.0)
		\psline[linewidth=1pt]{}(3.5,-3.0)(4.5,-3.0)					
		\psline[linewidth=1pt]{}(0.1,-2.0)(1.9,-2.0)					
		\psline[linewidth=1pt]{}(3.1,-2.0)(4.9,-2.0)				
		\psline[linewidth=1pt]{}(2.1,-2.0)(2.9,-2.0)			
		
		\psline[linewidth=1pt]{}(0.5,-1.0)(0.0,-2.0)	
		\psline[linewidth=1pt]{}(2,-2.0)(1.5,-3.0)
		\psline[linewidth=1pt]{}(0.5,-3.0)(0.0,-2.0)	
		\psline[linewidth=1pt]{}(2,-2.0)(1.5,-1.0)		
		\psline[linewidth=1pt]{}(0.5,-1.0)(1.5,-3.0)	
		\psline[linewidth=1pt]{}(0.5,-3.0)(1.5,-1.0)						
		
		\psline[linewidth=1pt]{}(3.5,-1.0)(3.0,-2.0)	
		\psline[linewidth=1pt]{}(5,-2.0)(4.5,-3.0)
		\psline[linewidth=1pt]{}(3.5,-3.0)(3.0,-2.0)	
		\psline[linewidth=1pt]{}(5,-2.0)(4.5,-1.0)													
		\psline[linewidth=1pt]{}(3.5,-1.0)(4.5,-3.0)	
		\psline[linewidth=1pt]{}(3.5,-3.0)(4.5,-1.0)	
		
		\pscircle[fillstyle=solid, fillcolor=white](0,-2.0){0.1}
		\pscircle[fillstyle=solid, fillcolor=white](2,-2.0){0.1}
		\pscircle[fillstyle=solid, fillcolor=white](0.5,-1.0){0.1}
		\pscircle[fillstyle=solid, fillcolor=white](1.5,-1.0){0.1}
		\pscircle[fillstyle=solid, fillcolor=white](0.5,-3.0){0.1}
		\pscircle[fillstyle=solid, fillcolor=gray](1.5,-3.0){0.1}
		\pscircle[fillstyle=solid, fillcolor=white](3,-2.0){0.1}
		\pscircle[fillstyle=solid, fillcolor=white](5,-2.0){0.1}
		\pscircle[fillstyle=solid, fillcolor=white](3.5,-1.0){0.1}
		\pscircle[fillstyle=solid, fillcolor=white](4.5,-1.0){0.1}
		\pscircle[fillstyle=solid, fillcolor=gray](3.5,-3.0){0.1}
		\pscircle[fillstyle=solid, fillcolor=white](4.5,-3.0){0.1}
		
		\rput[t](0,-1.0){$\mathcal{C}_1$}
		\rput[t](5,-1.0){$\mathcal{C}_2$}
		\rput[t](2,-1.5){$i$}
		\rput[t](3,-1.5){$j$}
		\rput[t](1.9,-2.9){$m$}
		\rput[t](3.2,-2.9){$n$}
		
		\rput[t](2.5,-2.1){1}
		\rput[t](2,-2.4){4}
		\rput[t](3,-2.4){4}
		
		\rput[t](2.5,-3.5){(c) clustered graph}
		\end{pspicture}
\caption{\label{fig_graph_signals}Graph signals defined over (a) a chain graph (representing discrete time signals), 
(b) grid graph (representing 2D-images) and (c) a general graph (representing social network data), whose edges $\{i,j\}\!\in\!\edges$ 
are captioned by edge weights $W_{i,j}$.}
\end{figure} 

Consider a partition $\partition=\{\cluster_{1},\ldots,\cluster_{|\partition|}\}$ of the data graph $\graph$ 
into $|\partition|$ disjoint subsets $\cluster_{l}$ of nodes (``clusters'') such that $\nodes = \bigcup_{l=1}^{|\partition|} \cluster_{l}$. 
We associate a subset $\cluster \subseteq \nodes$ of nodes with a particular ``indicator'' graph signal 
\begin{equation}
\mathcal{I}_{\cluster}[i] \!\defeq\! \begin{cases} 1 &\mbox{ if } i \in \cluster \\ 0 & \mbox{ else.} \end{cases} 
\end{equation} 
A simple model of clustered graph signals is then obtained by piece-wise constant or clustered graph signals of the form
\begin{equation}
\label{equ_def_clustered_signal_model}
\xsigval{i} \!=\!\sum_{l =1}^{|\partition|} a_{l} \mathcal{I}_{\cluster_{l}}[i].
\end{equation} 
In Figure \ref{fig_twoclusterchain}, we depict a clustered graph signal for a chain graph with $10$ nodes which 
are partitioned into two clusters: $\cluster_{1}$ and $\cluster_{2}$. 
\begin{figure}
\begin{pspicture}(-1.5,-1)(5.5,2.5)
\psset{unit=1.2cm}
\psline[linewidth=1pt]{<->}(-0.5,0)(-0.5,0.7)
\pscircle(0,0.7){0.1}
\rput[tl](1.2,1.1){$1$}
\psline[linewidth=1pt]{}(0.1,0.7)(0.4,0.7)
\pscircle(0.5,0.7){0.1}
\psline[linewidth=1pt]{}(0.6,0.7)(0.9,0.7)
\pscircle(1,0.7){0.1}
\psline[linewidth=1pt]{}(1.1,0.7)(1.4,0.7)
\pscircle(1.5,0.7){0.1}
\psline[linewidth=1pt]{}(1.6,0.7)(1.9,0.7)
\pscircle(2,0.7){0.1}
\psline[linewidth=1pt]{}(2.05,0.75)(2.45,2.05)
\rput[tl](2.3,1.4){$1/2$}
\pscircle(2.5,2.1){0.1}
\psline[linewidth=1pt]{}(2.6,2.1)(2.9,2.1)
\pscircle(3,2.1){0.1}
\psline[linewidth=1pt]{}(3.1,2.1)(3.4,2.1)
\pscircle(3.5,2.1){0.1}
\psline[linewidth=1pt]{}(3.6,2.1)(3.9,2.1)
\pscircle(4.0,2.1){0.1}
\psline[linewidth=1pt]{}(4.1,2.1)(4.4,2.1)
\pscircle(4.5,2.1){0.1}
\psline[linewidth=1pt]{<->}(5,0)(5,2.1)

\rput[tl](-0.8,0.4){$a_1$}
\rput[tl](5.2,1.1){$a_2$}
\rput[tl](-0.3,1.1){$i\!=\!1$}
\rput[tl](4.4,2.5){$i\!=\!10$}
\rput[tl](-0.1,-0.5){$\underbrace{\hspace{2.8cm}}_{\mathcal{C}_1}$}
\rput[tl](2.4,-0.5){$\underbrace{\hspace{2.8cm}}_{\mathcal{C}_2}$}
\end{pspicture}
\caption{\label{fig_twoclusterchain}A clustered graph signal $x[i] = a_{1} \mathcal{I}_{\mathcal{C}_{1}}[i] +a_{2} \mathcal{I}_{\mathcal{C}_{2}}[i]$  
(cf.\ \eqref{equ_def_clustered_signal_model}) defined over a chain graph which is partitioned into two equal-size clusters $\cluster_{1}$ and $\cluster_{2}$ which 
consist of consecutive nodes. 
The edges connecting nodes within the same cluster have weight $1$, while the single edge connecting nodes from different clusters has weight $1/2$.}
\end{figure}
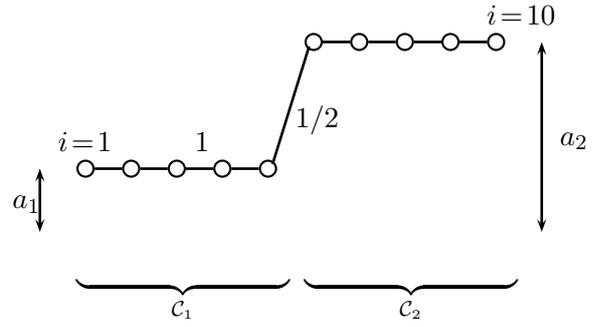 

It will be convenient to define, for a given partition $\partition$, its boundary 
$\partial \partition \subseteq \edges$ as the set of 
edges $\{i,j\} \in \edges$ which connect nodes $i \!\in\! \mathcal{C}_{a}$ and 
$j \!\in\! \mathcal{C}_{b}$ from different clusters, i.e., with $\mathcal{C}_{a} \!\neq\! \mathcal{C}_{b}$. 
With a slight abuse of notation, we will use the same symbol $\partial \partition$ also to 
denote the set of nodes which are connected to a node from another cluster. 

The TV of a clustered graph signal of the form \eqref{equ_def_clustered_signal_model} can be upper bounded as  
\begin{equation}
\label{equ_bound_TV_norm_clustered}
\| \xsig \|_{\rm TV} \leq 2 \max_{l\in\{1,\ldots,|\partition|\}} |a_{l}|  \sum_{\{i,j\} \in \boundary} W_{i,j}. 
\end{equation} 
Thus, for a partition $\partition$ with small weighted boundary $\sum_{\{i,j\} \in \boundary} W_{i,j}$, 
the associated clustered graph signals \eqref{equ_def_clustered_signal_model} have small TV $\| \xsig \|_{\rm TV}$ due to \eqref{equ_bound_TV_norm_clustered}. 


The signal model \eqref{equ_def_clustered_signal_model}, which also has been used in \cite{SharpnackJMLR2012,TrendGraph}, 
is closely related to the stochastic block model (SBM) \cite{Mossel2012}. Indeed, the SBM is obtained from 
\eqref{equ_def_clustered_signal_model} by choosing the coefficients $a_{\cluster}$ uniquely
for each cluster, i.e., $a_{\cluster} \in \{1,\ldots,|\partition|\}$. Moreover, the SBM provides a generative (stochastic) model 
for the edges within and between  the clusters $\cluster_{l}$. 

We highlight that the clustered signal model \eqref{equ_def_clustered_signal_model} is somewhat dual to the 
model of band-limited graph signals \cite{Romero2017,Romero2017,ChenVarmaSandKov2015,SemiSupervisedBook,zhou2004regularization,SandrMoura2014,Gadde2014,AndoZhang}. 
The model of band-limited graph signals is obtained by the subspaces spanned by the eigenvectors of the 
graph Laplacian corresponding to the smallest (in magnitude) eigenvalues, i.e., the low-frequency components. 
Such band-limited graph signals are smooth in the sense of small values of the Laplacian quadratic form \cite{Bapat2014}
\begin{equation}
\label{equ_Lapl_quadratic_form}
\sum_{\{i,j\} \in \edges} W_{i,j}  (\xsigval{j}\!-\! \xsigval{i})^{2} = \vx^{T} \mathbf{L} \vx. 
\end{equation}
Here, we used the vector representation $\vx=(\xsigval{1},\ldots,\xsigval{\siglen})^{T}$ of the 
graph signal $\xsig$ and the graph Laplacian matrix $\mL \in \mathbb{R}^{\siglen \times \siglen}$ defined element-wise as 
\begin{equation}
\label{equ_def_Laplacian_entry_wise}
L_{i,j} = \begin{cases} \sum_{k \in \nodes} W_{i,k} & \mbox{ if } i = j \\ - W_{i,j} & \mbox{ otherwise.} \end{cases}
\end{equation}

A band-limited graph signal $\xsig$ 
is characterized by a clustering (within a small bandwidth) of their graph Fourier transform (GFT) coefficients \cite{WangLiuGu2015}
\begin{equation} 
\label{equ_def_GFT}
\tilde{x}[l] \defeq \vu_{l}^{T} \mathbf{x} \mbox{, for } l=1,\ldots, \siglen, 
\end{equation} 
with the orthonormal eigenvectors $\{ \vu_{l} \}_{l=1}^{\siglen}$ of the graph Laplacian matrix $\mathbf{L}$. 
In particular, by the spectral decomposition of the psd graph Laplacian matrix $\mL$ (cf.\ \eqref{equ_def_Laplacian_entry_wise}), we have 
$\mathbf{L} = \mathbf{U} {\bm \Lambda} \mathbf{U}^{H}$ with $\mathbf{U} = \big(\vu_{1},\ldots,\vu_{\siglen}\big)$ and 
the diagonal matrix ${\bm \Lambda}$ having  (in decreasing order) the non-negative eigenvalues $\lambda_{l}$ of $\mathbf{L}$ on its diagonal. 

In contrast to band-limited graph signals, a clustered graph signal of the form 
\eqref{equ_def_clustered_signal_model} will typically have GFT coefficients 
which are spread out over the entire (graph) frequency range. Moreover, while band-limited 
graph signals are characterized by having a sparse GFT, a clustered graph signal of the 
form \eqref{equ_def_clustered_signal_model} has a dense (non-sparse) GFT in general. 
On the other hand, while a clustered graph signal of the form \eqref{equ_def_clustered_signal_model} has sparse signal differences 
$\{x[i]\!-\!x[j]\}_{\{i,j\} \in \edges}$, the signal differences of a band-limited graph signal are dense (non-sparse).

Let us illustrate the duality between the clustered graph signal model \eqref{equ_def_clustered_signal_model} and the model of 
band-limited graph signals (cf.\ \cite{Romero2017,Romero2017,AndoZhang}) by considering a dataset representing a 
finite length segment of a time series. The data graph $\graph_{0}$ underlying this time series data is chosen as a chain graph (cf.\ Fig.\ \ref{fig_twoclusterchain}), 
consisting of $N= 100$ nodes which represent the individual time samples. The time series is partitioned into two clusters $\cluster_{1}, \cluster_{2}$, 
each cluster consisting of $50$ consecutive nodes (time samples). We model the correlations between successive time samples using 
edge weight $W_{i,j}=1$ for data points $i,j$ belonging to the same cluster and a smaller weight $W_{i,j} = 1/2$ for the single edge $\{i,j\}$ connecting 
the two clusters $\cluster_{1}$ and $\cluster_{2}$. 

A clustered graph signal (time series) $x_{0}[i] = a_{1} \mathcal{I}_{\mathcal{C}_{1}}[i] +a_{2} \mathcal{I}_{\mathcal{C}_{2}}[i]$ (cf.\ \eqref{equ_def_clustered_signal_model} )
defined over $\graph_{0}$ 
is characterized by very sparse signal differences $\{ x_{0}[i]\!-\!x_{0}[j] \}_{\{i,j\} \in \edges}$. Indeed 
the signal difference $x_{0}[i]\!-\!\vx_{0}[j]$ of the clustered graph signal $\vx_{0}[\cdot]$ is non-zero only for the single edge 
$\{i,j\}$ which connects $\cluster_{1}$ and $\cluster_{2}$. In stark contrast, the GFT of $x_{0}[\cdot]$ is spread out over the 
entire (graph) frequency range (cf.\ Fig.\ \ref{fig_gft}), i.e., the graph signal $x_{0}[\cdot]$ does not conform with the band-limited signal model. 

On the other hand, we illustrate in Fig.\ \ref{fig_bandlimsig} a graph signal $x_{\rm BL}[\cdot]$ with GFT coefficients $\tilde{x}_{\rm BL}[l]=1$ (cf.\ \eqref{equ_def_GFT}) 
for $l=1,2$ and $\tilde{x}_{\rm BL}[l]=0$ otherwise. Thus, the graph signal is clearly band-limited (it has only two non-zero GFT coefficients) 
but the signal differences $x_{\rm BL}[i]-x_{\rm BL}[j]$ across the edges $\{i,j\} \in \edges$ are clearly non-sparse. 
\begin{figure}
\begin{center}
\includegraphics[height=3cm]{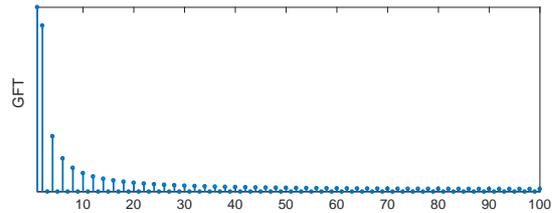}  
\caption{The magnitudes of the GFT coefficients $\tilde{x}[l]$ (cf.\ \eqref{equ_def_GFT}) of a clustered graph signal $x_{0}[\cdot]$ defined over a chain graph (cf.\ Fig.\ \ref{fig_twoclusterchain}).}
\label{fig_gft}
\end{center}
\end{figure}
\begin{figure}
\begin{center}
\includegraphics[height=3cm]{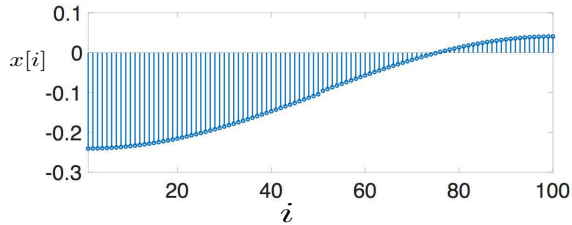}  
\caption{A strongly band-limited graph signal defined over a chain graph with $\signalsize\!=\!100$.}
\label{fig_bandlimsig}
\end{center}
\end{figure}



\subsection{Recovery via nLasso}
\label{equ_gsr_sec}

Given a dataset with data graph $\graph=(\nodes,\edges,\mW)$, we aim 
at recovering a graph signal $\xsig \in \graphsigs$ from its noisy values 
\begin{equation}
\label{equ_model_initial_labels}
y[i] = \xsigval{i} + \noise[i] 
\end{equation}  
provided on a (small) sampling set
\begin{equation} 
\label{eq_def_samplingset}
\samplingset\defeq\{i_{1},\ldots,i_{\measlen}\} \subseteq \nodes.
\end{equation}
Typically $\measlen \ll \signalsize$, i.e., the sampling set is a small subset of all nodes in 
the data graph $\graph$. 

The recovered graph signal $\estxsig$ should incur only a small empirical (or training) error  
\begin{equation}
\label{equ_def_emp_error}
\widehat{E}(\estxsig) \defeq \sum_{i \in \samplingset} | \estxsigval{i}\!-\!y[i]|.
\end{equation}
Note that the definition \eqref{equ_def_emp_error} of the empirical error involves the 
$\ell_1$-norm of the deviation $\estxsig{i}\!-\!y[i]$ between recovered and measured signal samples. 
This is different from the error criterion used in the ordinary Lasso, i.e., the squared-error loss $\sum_{i \in \samplingset} (\estxsigval{i}\!-\!y[i])^2$ \cite{BuhlGeerBook}. 
The definition \eqref{equ_def_emp_error} is beneficial for applications with 
measurement errors $\noise_{i}$ (cf.\ \eqref{equ_model_initial_labels}) having mainly small values except for a 
few large outliers \cite{Tsitsvero2016,pock_chambolle_2016}. However, by contrast 
to plain Lasso, the error function in \eqref{equ_def_emp_error} does not satisfy a restricted strong 
convexity property \cite{AgarNeg2012}, which might be 
detrimental for the convergence speed of the resulting recovery methods (cf.\ Section \ref{secNumerical}). 

In order to recover a clustered graph signal with a small TV $\| \estxsig \|_{\rm TV}$ (cf.\ \eqref{equ_bound_TV_norm_clustered}) 
from the noisy signal samples $\{y[i] \}_{i \in \samplingset}$ 
it is sensible to consider the recovery problem 
\begin{align} 
\estxsig & \in \argmin_{\tilde{x}[\cdot] \in \graphsigs} \widehat{E}(\tilde{x}[\cdot])  + \lambda \| \tilde{x}[\cdot] \|_{\rm TV}.  \label{equ_semi_sup_learning_problem}
\end{align}
This recovery problem amounts to a convex optimization problem \cite{BoydConvexBook}, which, as the notation already indicates, 
might have multiple solutions $\estxsig$ (which form a convex set). 
In what follows, we will derive conditions on 
the sampling set $\samplingset$ such that any solution $\estxsig$ of \eqref{equ_semi_sup_learning_problem} allows 
to accurately recover clustered a graph signal $\xsig$ of the form \eqref{equ_def_clustered_signal_model}. 

Any graph signal obtained from \eqref{equ_semi_sup_learning_problem} balances the empirical error $\widehat{E}(\estxsig)$ 
with the TV $ \| \estxsig \|_{\rm TV}$ in an optimal manner. The parameter $\lambda$ in \eqref{equ_semi_sup_learning_problem} 
allows to trade off a small empirical error against the amount to which the resulting signal is clustered, i.e., having a small TV. In particular, choosing a small value for $\lambda$ 
enforces the solutions of \eqref{equ_semi_sup_learning_problem} to yield a small empirical error, whereas choosing a 
large value for $\lambda$ enforces the solutions of \eqref{equ_semi_sup_learning_problem} to have small TV. 
Our analysis in Section \ref{sec_main_results} provides a selection criterion for the parameter $\lambda$ which is based on 
the location of the sampling set $\samplingset$ (cf.\ \eqref{eq_def_samplingset}) and the partition $\partition$ underlying the clustered graph signal model \eqref{equ_def_clustered_signal_model}.  
Alternatively, for sufficiently large sampling sets one might choose $\lambda$ using a cross-validation procedure \cite{hastie01statisticallearning}.

Note that the recovery problem \eqref{equ_semi_sup_learning_problem} is a particular instance of the generic 
nLasso problem studied in \cite{NetworkLasso}. There exist efficient convex optimization methods for solving the nLasso problem 
\eqref{equ_semi_sup_learning_problem} (cf.\ \cite{ZhuAugADMM} and the references therein). 
In particular, the alternating method of multipliers (ADMM) 
has been applied to the nLasso problem in \cite{NetworkLasso} to obtain a scalable learning algorithm which 
can cope with massive heterogeneous datasets.

\section{When is Network Lasso Accurate?} 
\label{sec_main_results} 

The accuracy of graph signal recovery methods based on the nLasso problem \eqref{equ_semi_sup_learning_problem}, 
depends on how close the solutions $\hat{x}[\cdot]$ of \eqref{equ_semi_sup_learning_problem} are to the true underlying graph signal $\xsig \in \graphsigs$. 
In what follows, we present a condition which guarantees any solution $\hat{x}[\cdot]$ of \eqref{equ_semi_sup_learning_problem} 
to be close to the underlying graph signal $\xsig$ if it is clustered of the form \eqref{equ_def_clustered_signal_model}. 

A main contribution of this paper is the insight that the accuracy of nLasso methods, aiming at solving \eqref{equ_semi_sup_learning_problem}, depends on the topology of the underlying data graph 
via the existence of certain \emph{flows with demands} \cite{KleinbergTardos2006}. Given a data graph $\graph$, we define a flow on it as a 
mapping $\flow[\cdot]: \nodes \times \nodes \rightarrow \mathbb{R}$ which assigns each directed edge $(i,j)$ the 
value $\flow[(i,j)]$, which can be interpreted as the amount of some quantity flowing through the edge $(i,j)$ \cite{KleinbergTardos2006}.
A flow with demands has to satisfy the conservation law 
\begin{equation} 
\label{equ_conservation_flow}
\hspace*{-6mm}\sum_{j \in \mathcal{N}(i)}\hspace*{-2mm} \flow(j,i)   - \flow(i,j)  = d[i] \mbox{, for any }  i \!\in\! \nodes
\end{equation}
with a prescribed demand $d[i]$ for each node $i \in \nodes$. 
Moreover, we require flows to satisfy the capacity constraints  
\begin{equation} 
\label{equ_cap_constraint}
|\flow (i,j)| \leq W_{i,j} \mbox{ for any  edge } (i,j) \!\in\! \edges \setminus \boundary. 
\end{equation} 
Note that the capacity constraint \eqref{equ_cap_constraint} applies only to intra-cluster edges 
and does not involve the boundary edges $\boundary$. The flow values $\flow(i,j)$ at the boundary edges $(i,j) \in \boundary$ 
take a special role in the following definition of the notion of resolving sampling sets. 
%

\begin{definition}
\label{def_sampling_set_resolves}
Consider a dataset with data graph $\graph = (\nodes, \edges,\mathbf{W})$ which contains the sampling set $\samplingset \subseteq \nodes$. 
The sampling set $\samplingset$ resolves a partition $\partition=\{\cluster_{1},\ldots,\cluster_{|\partition|}\}$ 
with constants $K$ and $L$ if, for any $b_{i,j} \in \{0,1\}$ with $\{i,j\}\!\in\! \partial \partition$, there exists a flow $\flow[\cdot]$ on 
$\graph$ (cf.\ \eqref{equ_conservation_flow}, \eqref{equ_cap_constraint}) with 
\begin{equation} 
\label{equ_resol_set_flow_cond}
\flow(i,j) = b_{i,j} \cdot L \cdot W_{i,j}\mbox{, }\flow(j,i) = \bar{b}_{i,j} \cdot L \cdot W_{i,j}
\end{equation} 
for every boundary edge $\{i,j\} \in \boundary$ and demands (cf.\ \eqref{equ_conservation_flow}) satisfying 
\begin{align}
\label{equ_bound_demand_K}
\hspace*{-3mm}|d[i]| \!\leq\! K   \mbox{ for } i \!\in\! \samplingset \mbox{, and }d[i] \!=\! 0 \mbox{ for } i \!\in\!  \nodes\!\setminus\!\samplingset. 
\end{align} 
\end{definition} 
This definition requires nodes of a resolving sampling set to be sufficiently well connected with every boundary edge $\{i,j\} \in \boundary$. In particular, 
we could think of injecting (absorbing) certain amounts of flow into (from) the data graph at the sampled nodes. At each sampled node $i\in \samplingset$, 
we can inject (absorb) a flow of level at most $K$ (cf.\ \eqref{equ_bound_demand_K}). The injected (absorbed) flow has to be routed from the sampled nodes 
via the intra-cluster edges to each boundary edge such that it carries a flow value $L \cdot W_{i,j}$. Clearly, this is only possible if there are paths of 
sufficient capacity between sampled nodes and boundary edges available. 

The definition of resolving sampling sets is quantitive as it involves the numerical constants $K$ and $L$. Our main result stated below is an 
upper bound on the estimation error of nLasso methods which depends on the value of these constants. It will turn out that resolving sampling 
sets with a small values of $K$ and large values of $L$ are beneficial for the ability of nLasso to recover the entire graph signal from noisy 
samples observed on the sampling set. However, the constants $K$ and $L$ are coupled via the flow $\flow[\cdot]$ used in Definition \ref{def_sampling_set_resolves}. 
E.g., the constant $K$ always has to satisfy 
\begin{equation} 
K \geq \max_{ \{i,j\} \in \boundary}  L W_{i,j}.
\end{equation}  
Thus, the minimum possible value for $K$ depends on the values of the edge weights $W_{i,j}$ of the data graph. 
Moreover, the minimum possible value for $L$ depends on the precise connectivity of sampled 
nodes with the boundary edges $\boundary$. Indeed, Definition \ref{def_sampling_set_resolves} requires to route 
(by satisfying the capacity constraints \eqref{equ_cap_constraint}), an amount of flow given by $L W_{i,j}$ from a 
boundary edge $\{i,j\} \in \boundary$ to the sampled nodes in $\samplingset$.

In order to make (the somewhat abstract) Definition \ref{def_sampling_set_resolves} more transparent,  
let us state an easy-to-check sufficient condition for a sampling set $\samplingset$ 
such that it resolves a given partition $\partition$. 
\begin{lemma}
\label{lem_suff_cond_NNSP}
Consider a partition $\partition=\{ \cluster_{1},\ldots,\cluster_{|\partition|} \}$ of the data graph $\graph$ which contains the 
sampling set $\samplingset \subseteq \nodes$. If each boundary edge $\{i,j\} \in \partial \partition$ 
with $i\!\in\!\mathcal{C}_{a}$, $j \!\in\! \mathcal{C}_{b}$ is connected to sampled nodes, i.e., 
$\{m,i\}\!\in\!\edges$ and $\{n,j\}\!\in\!\edges$ with $m \!\in\! \samplingset\!\cap\!\mathcal{C}_{a}$,  
$n\!\in\!\samplingset\!\cap\!\mathcal{C}_{b}$, and weights $W_{m,i}, W_{n,j} \geq L W_{i,j}$, 
then the sampling set $\samplingset$ resolves the partition $\partition$ with constants $L$ and 
\begin{equation}
K = L \cdot \max_{i \in \nodes} | \mathcal{N}(i) \cap \partial \partition|  .
\end{equation}   
\end{lemma}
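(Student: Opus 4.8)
The plan is to exhibit, for an arbitrary assignment of bits $b_{i,j} \in \{0,1\}$ to the boundary edges $\{i,j\} \in \boundary$, an explicit flow $\flow[\cdot]$ that realizes the prescribed boundary values \eqref{equ_resol_set_flow_cond}, respects the intra-cluster capacities \eqref{equ_cap_constraint}, and produces node demands obeying \eqref{equ_bound_demand_K}. The construction is purely local. For each boundary edge $\{i,j\} \in \boundary$ with $i \in \cluster_{a}$ and $j \in \cluster_{b}$, the hypothesis supplies sampled anchors $m \in \samplingset \cap \cluster_{a}$ adjacent to $i$ and $n \in \samplingset \cap \cluster_{b}$ adjacent to $j$, with $W_{m,i}, W_{n,j} \geq L W_{i,j}$. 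I define an elementary flow $\flow_{\{i,j\}}[\cdot]$ supported on the three edges $\{m,i\}$, $\{i,j\}$, $\{j,n\}$, which form a path from $m$ to $n$, carrying the amount $L W_{i,j}$ along this path in the direction $m \to n$ if $b_{i,j}=1$ and in the direction $n \to m$ if $b_{i,j}=0$. The candidate flow is the superposition $\flow[\cdot] = \sum_{\{i,j\} \in \boundary} \flow_{\{i,j\}}[\cdot]$.

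First I would verify the boundary values \eqref{equ_resol_set_flow_cond}. The only boundary edge used by the path attached to $\{i,j\}$ is $\{i,j\}$ itself, since its remaining two edges $\{m,i\}$ and $\{j,n\}$ join nodes within the same cluster and are therefore intra-cluster. Consequently distinct boundary edges act on disjoint boundary coordinates, so on each $\{i,j\} \in \boundary$ exactly one elementary flow is active and $\flow(i,j) = b_{i,j} L W_{i,j}$, $\flow(j,i) = \bar{b}_{i,j} L W_{i,j}$, as demanded.

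Next I would settle conservation \eqref{equ_conservation_flow} and the demand bound. Within each elementary flow the interior nodes $i$ and $j$ forward exactly what they receive, so every node other than the sampled anchor nodes carries zero net demand; in particular $d[i]=0$ for all $i \notin \samplingset$. At a sampled anchor $p$ the demand is an algebraic sum of terms $\pm L W_{i,j}$, one per boundary edge anchored at $p$. The capacity check on an anchor edge is immediate for a single path: $|\flow(m,i)| = L W_{i,j} \leq W_{m,i}$ is precisely the hypothesis.

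The main obstacle, and the origin of the factor in $K$, is the bookkeeping when several boundary edges meet the same anchor. A sampled node $p$ can anchor a boundary edge $\{i,j\}$ only through a neighbour that is itself a boundary node, so the number of $\pm L W_{i,j}$ contributions to $d[p]$ --- equivalently the number of elementary flows incident to $p$ --- is controlled by $|\mathcal{N}(p) \cap \boundary|$; summing and bounding the weights yields $|d[p]| \leq L \max_{i} |\mathcal{N}(i) \cap \boundary| = K$. I expect the delicate step to be showing that the capacity constraint \eqref{equ_cap_constraint} survives the superposition when one intra-cluster anchor edge is shared by several paths, since the hypothesis controls each $W_{m,i}$ only against a single $L W_{i,j}$; this would be handled by routing the boundary edges incident to a common node through distinct anchor edges, or by aggregating their flows into one net flow on $\{m,i\}$ and tracking the cancellations.
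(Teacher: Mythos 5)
Your construction --- superposing, for each boundary edge $\{i,j\}\in\boundary$, a three-edge path flow of value $LW_{i,j}$ from anchor $m$ through $i$, $j$ to anchor $n$, oriented by $b_{i,j}$ --- is surely the intended one (the paper in fact states Lemma~\ref{lem_suff_cond_NNSP} without proof, so there is no argument to compare against), and your verification of the boundary values \eqref{equ_resol_set_flow_cond} and of $d[i]=0$ off $\samplingset$ is correct: anchor edges are intra-cluster, so distinct elementary flows never meet on a boundary edge. However, the step you explicitly defer is not routine bookkeeping; it is a genuine obstruction, and neither of your proposed repairs is available under the stated hypotheses.

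Concretely, let a boundary node $i\in\cluster_{a}$ be incident to three boundary edges $\{i,j_1\},\{i,j_2\},\{i,j_3\}$, each of weight $1$, and let $\{m,i\}$ with $W_{m,i}=L$ be the \emph{only} intra-cluster edge at $i$; the hypothesis $W_{m,i}\ge LW_{i,j_t}$ then holds for each $t$ separately. Choosing $b_{i,j_t}=0$ for $t=1,2,3$, conditions \eqref{equ_resol_set_flow_cond} and $d[i]=0$ force a net flow of $3L$ to leave $i$ through $\{m,i\}$, whereas \eqref{equ_cap_constraint} permits a net flow of at most $2W_{m,i}=2L$ on that edge (even letting both directed copies carry $\pm W_{m,i}$). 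So your superposition violates \eqref{equ_cap_constraint}, and in fact \emph{no} admissible flow exists --- the conclusion of the lemma itself fails for this configuration. Re-routing through ``distinct anchor edges'' is impossible (the hypothesis supplies only one anchor per boundary edge, and here $i$ has a single intra-cluster edge), and cancellations cannot be invoked because the bits $b_{i,j}$ are adversarial. The demand bound suffers from the same defect: $|d[m]|$ equals $L\sum W_{i,j}$ summed over all boundary edges anchored at $m$, several of which may share one boundary neighbour $i\in\mathcal{N}(m)\cap\boundary$ and carry weights exceeding one, so ``summing and bounding the weights'' does not yield $|d[m]|\le L\max_{i}|\mathcal{N}(i)\cap\boundary|$. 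A complete proof requires an additional hypothesis --- e.g., that each boundary node lies on exactly one boundary edge (as in Fig.~\ref{fig_graph_signals}-(c)), or that $W_{m,i}\ge L\sum_{j:\{i,j\}\in\boundary}W_{i,j}$ --- under which your local construction does go through.
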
 
In Fig.\ \ref{fig_graph_signals}-(c) we depict a data graph consisting of two clusters 
$\partition=\{\cluster_{1},\cluster_{2}\}$. The data graph contains the sampling set $\samplingset=\{m,n\}$ 
which resolves the partition $\partition$ with constants $K=L=4$ according to Lemma \ref{lem_suff_cond_NNSP}. 

The sufficient condition provided by Lemma \ref{lem_suff_cond_NNSP} can be used to guide 
the choice for the sampling set $\samplingset$. In particular Lemma \ref{lem_suff_cond_NNSP} suggests 
to sample more densely near the boundary edges $\partial \partition$ which connect different clusters. 
This rationale allows to cope with applications where the underlying partition $\partition$ is unknown. 
In particular, we could use highly scalable local clustering methods (cf.\ \cite{Spielman_alocal}) to find the 
cluster boundaries $\partial \partition$ and then select the sampled nodes in their vicinity. 
Another approach to cope with lack of information about $\partition$ is based on using random walks to identify 
the subset of nodes with a large boundary which are sampled more densely \cite{SaeedSampta17}. 


We now state our main result which is that solutions of the nLasso problem \eqref{equ_semi_sup_learning_problem} 
allow to accurately recover the true underlying clustered graph signal $\xsig$ 
(conforming with the partition $\partition$ (cf.\ \eqref{equ_def_clustered_signal_model}) from the noisy 
measurements \eqref{equ_model_initial_labels} whenever the sampling set $\samplingset$ resolves the partition $\partition$.
\begin{theorem} 
\label{main_thm_exact_sparse}
Consider a clustered graph signal $\xsig$ of the form \eqref{equ_def_clustered_signal_model}, 
with underlying partition $\partition=\{\cluster_{1},\ldots,\cluster_{|\partition|}\}$ 
of the data graph into disjoint clusters $\cluster_{l}$. We observe the noisy signal values $y[i]$ 
at the samples nodes $\samplingset \subseteq \nodes$ (cf.\ \eqref{equ_model_initial_labels}). 
If the sampling set $\samplingset$ resolves the partition $\partition$ with parameters $K>0,L >1$, 
any solution $\estxsig$ of the nLasso problem \eqref{equ_semi_sup_learning_problem} with $\lambda \!\defeq\! 1/K$ satisfies 
\begin{equation}
\label{equ_bound_main_result}
 \| \estxsig\!-\!\xsig \|_{\rm TV}\!\leq\! (K\!+\!4/(L\!-\!1))  \sum_{i \in \samplingset} |\noise[i]|. 
\end{equation} 
\end{theorem}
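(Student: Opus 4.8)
The plan is to bound the total variation of the error signal $\zsig$, defined by $\zsigval{i} \defeq \estxsigval{i} - \xsigval{i}$. First I would exploit optimality of $\estxsig$ for \eqref{equ_semi_sup_learning_problem}: since the true signal $\xsig$ is feasible, $\widehat{E}(\estxsig) + \lambda \| \estxsig \|_{\rm TV} \leq \widehat{E}(\xsig) + \lambda \| \xsig \|_{\rm TV}$. Because $\xsig$ is constant on each cluster (cf.\ \eqref{equ_def_clustered_signal_model}), only boundary edges contribute to its TV, so $\| \xsig \|_{\rm TV} = \| \xsig \|_{\boundary}$, and $\widehat{E}(\xsig) = \sum_{i \in \samplingset} |\noise[i]|$. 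Writing $\estxsig = \xsig + \zsig$, splitting $\edges$ into $\boundary$ and $\edges \setminus \boundary$, and applying the reverse triangle inequality on the boundary edges gives $\| \estxsig \|_{\rm TV} \geq \| \zsig \|_{\rm TV} - 2 \| \zsig \|_{\boundary} + \| \xsig \|_{\boundary}$. Cancelling $\lambda \| \xsig \|_{\boundary}$, discarding the nonnegative term $\widehat{E}(\estxsig)$, and setting $\lambda = 1/K$ yields the \emph{basic inequality}
\begin{equation}
\label{equ_basic_ineq_plan}
\| \zsig \|_{\rm TV} \leq K \sum_{i \in \samplingset} |\noise[i]| + 2 \| \zsig \|_{\boundary}.
\end{equation}
The task therefore reduces to controlling the boundary TV $\| \zsig \|_{\boundary}$ of the error.

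This is where the resolving property is invoked, and it is the heart of the argument. The key tool is the discrete Green (summation-by-parts) identity: for any flow $\flow[\cdot]$ with demands $d[i]$ obeying \eqref{equ_conservation_flow},
\begin{equation}
\sum_{i \in \nodes} \zsigval{i} \, d[i] = \sum_{\{i,j\} \in \edges} \big( \flow(j,i) - \flow(i,j) \big) \big( \zsigval{i} - \zsigval{j} \big),
\end{equation}
the summand being independent of the chosen orientation of each edge. Applying Definition \ref{def_sampling_set_resolves}, I would choose the free bits $b_{i,j}$ edge-by-edge so that on each boundary edge the net flow $\flow(j,i)-\flow(i,j) = \pm L W_{i,j}$ has the same sign as $\zsigval{i}-\zsigval{j}$; then the boundary part of the sum equals $L \| \zsig \|_{\boundary}$. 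On an intra-cluster edge I may cancel opposing flows (this preserves both the demands and the capacity bound), so \eqref{equ_cap_constraint} gives $|\flow(j,i)-\flow(i,j)| \leq W_{i,j}$ and the intra-cluster part is bounded by $\| \zsig \|_{\edges \setminus \boundary}$. Finally the demand constraint \eqref{equ_bound_demand_K} bounds the left-hand side by $K \sum_{i \in \samplingset} |\zsigval{i}|$. Rearranging produces the \emph{flow inequality}
\begin{equation}
\label{equ_flow_ineq_plan}
L \| \zsig \|_{\boundary} \leq K \sum_{i \in \samplingset} |\zsigval{i}| + \| \zsig \|_{\edges \setminus \boundary}.
\end{equation}

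To conclude I would eliminate the sampled-node term between \eqref{equ_basic_ineq_plan} and \eqref{equ_flow_ineq_plan}. Optimality also controls the residual $\sum_{i \in \samplingset} |\zsigval{i} - \noise[i]|$, so $\sum_{i \in \samplingset} |\zsigval{i}|$ is bounded, via the triangle inequality, by a combination of $\sum_{i \in \samplingset}|\noise[i]|$, $\| \zsig \|_{\boundary}$ and $\| \zsig \|_{\rm TV}$; substituting this together with $\| \zsig \|_{\edges \setminus \boundary} = \| \zsig \|_{\rm TV} - \| \zsig \|_{\boundary}$ into \eqref{equ_flow_ineq_plan} collapses the TV terms and leaves $(L-1)\| \zsig \|_{\boundary} \leq 2K \sum_{i \in \samplingset}|\noise[i]|$, where the hypothesis $L>1$ is precisely what makes division by $L-1$ legitimate. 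Feeding the resulting bound on $\| \zsig \|_{\boundary}$ back into \eqref{equ_basic_ineq_plan} then yields an error bound of the form \eqref{equ_bound_main_result}. I expect the main obstacle to be the flow step: confirming that Definition \ref{def_sampling_set_resolves} supplies a single flow meeting the boundary, capacity, and demand requirements simultaneously, that the signs $b_{i,j}$ can be chosen independently per boundary edge to extract exactly $L\| \zsig \|_{\boundary}$, and that the intra-cluster cancellation is harmless; the surrounding estimates are routine triangle-inequality bookkeeping.
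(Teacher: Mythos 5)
Your proposal is correct and is essentially the paper's own argument: your flow inequality is precisely the network compatibility condition of Lemma \ref{lem_NNSP_samplingset_suff_recovery} (established by the same sign-matched choice of the bits $b_{i,j}$, summation by parts, and the capacity/demand bounds), and your basic inequality together with the elimination of the sampled-node term via the residual lower bound reproduces the chain of estimates in Lemma \ref{lem_NSP1}. One small remark: executed exactly, both your derivation and the paper's yield the constant $K\bigl(1+4/(L-1)\bigr)$ rather than the stated $K+4/(L-1)$ (the paper's final display contains a minor algebra slip), so your ``of the form'' hedge is warranted.
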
 
Thus, if the sampling set $\samplingset$ is chosen such that it resolves the partition $\partition=\{\cluster_{1},\ldots,\cluster_{|\partition|}\}$ 
(cf. Definition \ref{def_sampling_set_resolves}), nLasso methods (cf.\ \eqref{equ_semi_sup_learning_problem}) recover 
a clustered graph signal $\vx[\cdot]$ (cf.\ \eqref{equ_def_clustered_signal_model}) with an accuracy which is determined 
by the level of the measurement noise $\noise[i]$ (cf. \eqref{equ_model_initial_labels}).  


Let us highlight that the knowledge of the partition $\partition$ underlying the clustered graph signal model \eqref{equ_def_clustered_signal_model} 
is only needed for the analysis of nLasso methods leading to Theorem \ref{main_thm_exact_sparse}. In contrast, the actual implementation methods of nLasso methods 
based on \eqref{equ_semi_sup_learning_problem} does not require any knowledge of the underlying partition. 
What is more, if the true underlying graph signal $\vx [\cdot]$ is clustered according to \eqref{equ_def_clustered_signal_model} with different 
signal values $a_l$ for different clusters $\cluster_{l}$, the solutions of the nLasso \eqref{equ_semi_sup_learning_problem} 
could be used for determining the clusters $\cluster_{l}$ which constitute the partition $\partition$. 

We also note that the bound \eqref{equ_bound_main_result} characterizes the recovery error in terms of the semi-norm 
$\| \estxsig \!-\!\xsig \|_{\rm TV}$ which is agnostic towards a constant offset in the 
recovered graph signal $\estxsig$. In particular, having a small value of $\| \estxsig \!-\!\xsig \|_{\rm TV}$ 
does in general not imply a small squared error $\sum_{i \in \nodes}( \hat{x}[i]\!-\!\xsigval{i})^2$ as  
there might be an arbitrarily large constant offset contained in the nLasso solution $\estxsig$. 

However, if the error $\| \estxsig\!-\!\xsig \|_{\rm TV}$ is sufficiently small, we might be able to identify the boundary edges $\{i,j\} \in \boundary$ of 
the partition $\partition$ underlying a clustered graph signal of the form \eqref{equ_def_clustered_signal_model}. 

Indeed, for a clustered graph signal of the form \eqref{equ_def_clustered_signal_model}, the signal difference $\xsigval{i} - \xsigval{j}$ 
across edges is non-zero only for boundary edges $\{i,j\} \in \boundary$. Lets assume the signal differences of $\xsig$ across boundary 
edges $\{i,j\} \in \partition$ are lower bounded by some positive constant $\eta > 0$ and the nLasso error satisfies 
$\| \estxsig\!-\!\xsig \|_{\rm TV}< \eta/2$. As can be verified easily, we can then perfectly recover the boundary $\boundary$ 
of the partition $\partition = \{\cluster_{1},\ldots,\cluster_{|\partition|}\}$ as precisely those edges $\{i,j\} \in \edges$ for which $|\estxsigval{i} - \estxsigval{j}| \geq \eta/2$. 
Given the boundary $\boundary$, we can recover the partition $\partition$ and, in turn, 
average the noisy observations $y[i]$ over all sampled nodes $i \in \samplingset$ belonging to the same cluster.  
This simple post-processing of the nLasso estimate $\hat{x}[i]$ is summarized in Algorithm \ref{algo_postproc_nLasso}. 
\begin{algorithm}[h]
\caption{Post-Processing for nLasso}{}
\begin{algorithmic}[1]
\renewcommand{\algorithmicrequire}{\textbf{Input:}}
\renewcommand{\algorithmicensure}{\textbf{Output:}}
\Require  data graph $\graph=(\nodes,\edges,\mathbf{W})$, noisy signal samples $y[i]$ (cf.\ \eqref{equ_model_initial_labels}), nLasso estimate 
$\hat{x}[\cdot]$ (cf.\ \eqref{equ_semi_sup_learning_problem}) and threshold $\eta >0$
\State construct candidate boundary $\edgeset = \{ \{i,j\} \in \edges: |\hat{x}[i] - \hat{x}[j]| \geq \eta/2 \}$ 
\vspace*{2mm}
\State find partition $\widehat{\partition}= \{\cluster_{1},\ldots,\cluster_{|\widehat{\partition}|}\}$ with $\partial \widehat{\partition} = \edgeset$
\vspace*{2mm}
\State if no such partition exists return ``ERR''
\vspace*{2mm}
\State for each cluster $\cluster_{l} \in \widehat{\partition}$ 
\vspace*{2mm}
\State \quad construct set $\mathcal{A} = \cluster_{l} \cap \samplingset$
\vspace*{1mm}
\State \quad if set $\mathcal{A}$ is empty return ``ERR''
\vspace*{1mm}
\State \quad for every $i\!\in\!\cluster_{l}$ set $\tilde{x}[i]\!=\!(1/|\mathcal{A}|) \sum\limits_{j \in \mathcal{A}} y[j]$ 
\Ensure new estimate $\tilde{x}[\cdot]$ or ``ERR''
\end{algorithmic}
\label{algo_postproc_nLasso}
\end{algorithm}

\begin{lemma} 
\label{main_thm_exact_sparse_post}
Consider the setting of Theorem \ref{main_thm_exact_sparse} involving a clustered graph signal $\xsig$ 
of the form \eqref{equ_def_clustered_signal_model} with coefficients $a_{l}$ satisfying $|a_{l} - a_{l'}| > \eta$ for $l \neq l'$ with a known 
positive threshold $\eta >0$. 
We observe noisy signal samples $y[i]$ (cf.\ \eqref{equ_model_initial_labels}) over the sampling set $\samplingset$ with a bounded error 
$e[i] \leq \epsilon$. 
If the sampling set $\samplingset$ resolves the partition $\partition$ with parameters $K>0,L >1$ such that 
\begin{equation} 
(K\!+\!4/(L\!-\!1))  \sum_{i \in \samplingset} |\noise[i]| < \eta/2, 
\end{equation} 
then the signal $\tilde{x}[\cdot]$ delivered by Algorithm \ref{algo_postproc_nLasso} satisfies
\begin{equation}
\label{equ_bound_main_result_post}
\sum_{i \in \nodes} (\tilde{x}[i]\!-\!\xsigval{i})^2  \!\leq\! N \varepsilon^2. 
\end{equation} 
\end{lemma}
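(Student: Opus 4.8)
The plan is to combine the total-variation error bound of Theorem~\ref{main_thm_exact_sparse} with a step-by-step verification that Algorithm~\ref{algo_postproc_nLasso} does not abort and returns an accurate estimate. First I would invoke Theorem~\ref{main_thm_exact_sparse} with $\lambda = 1/K$: since $\samplingset$ resolves $\partition$ with parameters $K>0$, $L>1$, any nLasso solution $\estxsig$ obeys $\|\estxsig - \xsig\|_{\rm TV} \le (K+4/(L-1))\sum_{i\in\samplingset}|\noise[i]|$, which by hypothesis is strictly below $\eta/2$. Writing $z[\cdot] \defeq \estxsig - \xsig$, this yields $\|z\|_{\rm TV} < \eta/2$.

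Next I would show that the candidate boundary $\edgeset$ built in line~1 coincides with the true boundary $\boundary$. For any edge $\{i,j\}$ the single-term bound $W_{i,j}|z[i]-z[j]| \le \|z\|_{\rm TV} < \eta/2$ controls the pointwise difference of $z$; on an intra-cluster edge $\xsigval{i}=\xsigval{j}$, so $|\estxsigval{i}-\estxsigval{j}| = |z[i]-z[j]| < \eta/2$ and the edge is not flagged, whereas on a boundary edge joining $\cluster_l,\cluster_{l'}$ the separation $|a_l-a_{l'}|>\eta$ gives $|\estxsigval{i}-\estxsigval{j}| \ge \eta - |z[i]-z[j]| > \eta/2$, so the edge is flagged. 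Hence $\edgeset=\boundary$, the partition reconstructed in line~2 satisfies $\widehat{\partition}=\partition$, and line~3 does not abort. This is precisely the boundary-identification step announced as ``easily verified'' before the lemma.

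I would then rule out the failure in line~6 by showing every cluster contains a sampled node, and here the resolving property of Definition~\ref{def_sampling_set_resolves} does the work. Summing the conservation law \eqref{equ_conservation_flow} over all $i\in\cluster_l$, the intra-cluster flow contributions cancel and $\sum_{i\in\cluster_l} d[i]$ equals the net flow across the boundary edges incident to $\cluster_l$. Choosing the bits $b_{i,j}$ in Definition~\ref{def_sampling_set_resolves} so that the prescribed boundary flow \eqref{equ_resol_set_flow_cond} is directed into $\cluster_l$ on every incident boundary edge forces $\sum_{i\in\cluster_l} d[i] = \sum L W_{i,j} > 0$ (assuming $\cluster_l$ has an incident boundary edge, which holds whenever the graph is connected and $|\partition|\ge 2$); since $d[i]=0$ off the sampling set, some $i\in\cluster_l\cap\samplingset$, i.e.\ $\mathcal{A}\neq\emptyset$.

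Finally I would bound the squared error. For $i\in\cluster_l$ Algorithm~\ref{algo_postproc_nLasso} sets $\tilde{x}[i] = (1/|\mathcal{A}|)\sum_{j\in\mathcal{A}} y[j]$ with $\mathcal{A}=\cluster_l\cap\samplingset$; since every $j\in\mathcal{A}$ lies in $\cluster_l$ we have $y[j]=a_l+\noise[j]$ (cf.\ \eqref{equ_model_initial_labels}, \eqref{equ_def_clustered_signal_model}) and $\xsigval{i}=a_l$, so $\tilde{x}[i]-\xsigval{i} = (1/|\mathcal{A}|)\sum_{j\in\mathcal{A}}\noise[j]$, whose magnitude is at most $\varepsilon$ using $|\noise[j]|\le\varepsilon$. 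Summing over the $N$ nodes gives $\sum_{i\in\nodes}(\tilde{x}[i]-\xsigval{i})^2 \le N\varepsilon^2$, which is \eqref{equ_bound_main_result_post}. The main obstacle is the exact boundary recovery of the second step: the passage from the aggregate $\mathrm{TV}$ bound to per-edge differences must be tight enough to simultaneously reject every intra-cluster edge and accept every boundary edge, and this is where the normalisation of the weights $W_{i,j}$ (so that $W_{i,j}\ge 1$ on the relevant edges) together with the factor-of-two slack in $\|z\|_{\rm TV}<\eta/2$ is used.
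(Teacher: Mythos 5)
Your overall route coincides with the paper's: the paper never writes out a formal proof of this lemma, and its entire argument is the informal passage preceding Algorithm \ref{algo_postproc_nLasso} --- invoke Theorem \ref{main_thm_exact_sparse} to get $\| \estxsig - \xsig \|_{\rm TV} < \eta/2$, recover $\boundary$ by thresholding the edge differences of $\estxsig$ at $\eta/2$, and then note that averaging $y[j] = a_l + \noise[j]$ over $\mathcal{A} = \cluster_l \cap \samplingset$ leaves a per-node error of magnitude at most $\varepsilon$, whence $N\varepsilon^2$. Your flow-based argument that every cluster contains a sampled node (summing the conservation law \eqref{equ_conservation_flow} over $\cluster_l$ with the bits $b_{i,j}$ chosen so that all prescribed boundary flow is directed into $\cluster_l$) is a genuine addition: the paper silently assumes that Algorithm \ref{algo_postproc_nLasso} does not return ``ERR'' in line 6, and your argument actually derives this from Definition \ref{def_sampling_set_resolves}, modulo the degenerate case of a cluster with no incident boundary edge, which you correctly exclude.

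The one genuine gap is the one you flag yourself: passing from the aggregate bound $\|z[\cdot]\|_{\rm TV} < \eta/2$ to the per-edge bound $|z[i]-z[j]| < \eta/2$ requires $W_{i,j} \ge 1$ on every edge, and neither the lemma nor the paper assumes any such normalisation. Indeed, in the paper's own chain-graph example (Fig.\ \ref{fig_twoclusterchain}) the boundary edge has weight $1/2$, for which the per-edge bound degrades to $|z[i]-z[j]| < \eta$; then a boundary edge need not be flagged by the $\eta/2$ threshold and the intra-/inter-cluster separation argument collapses. This defect is inherited from the paper (its ``as can be verified easily'' claim has exactly the same hole), so your proposal is as correct as the paper's own argument but no more: to close it one must either assume $\min_{\{i,j\}\in\edges} W_{i,j} \ge 1$, or strengthen the hypothesis to $(K+4/(L-1))\sum_{i\in\samplingset}|\noise[i]| < (\eta/2)\min_{\{i,j\}\in\edges} W_{i,j}$. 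Everything else in your write-up is sound.
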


\section{Numerical Experiments} 
\label{secNumerical}

In order to illustrate the theoretical findings of Section \ref{sec_main_results} we report the results of some illustrative 
numerical experiments involving the recovery of clustered graph signals of the form \eqref{equ_def_clustered_signal_model} 
from a small number of noisy measurements \eqref{equ_model_initial_labels}. To this end, we implemented the iterative method 
ADMM \cite{DistrOptStatistLearningADMM} to solve the nLasso \eqref{equ_semi_sup_learning_problem} problem. 
We applied the resulting semi-supervised learning algorithm to two synthetically generated data sets. 
The first data set represents a time series, which can be represented as a graph signal over a chain graph. The nodes 
of the chain graph, which represent the discrete time instants are partitioned evenly into clusters of consecutive nodes. 
A second experiment is based on data sets generated using a recently proposed generative model for complex networks. 

\subsection{Chain Graph}
\label{sec_chain_graph}

Our first experiment, is based on a graph signal defined over a chain graph $\graph_{\rm chain}$ (cf.\ Fig.\ \ref{fig_twoclusterchain})
with $N = 10^5$ nodes $\nodes = \{1,2,\dots,N\}$, connected by $N-1$ undirected edges. 
The nodes of the data graph $\graph_{\rm chain}$ are partitioned into $N/10$ equal-sized clusters $\cluster_{l}$, $l = 1, \dots, N/10$, 
each constituted by $10$ consecutive nodes. The intrinsic clustering structure of the chain graph $\graph_{\rm chain}$ 
matches the partition $\partition_{\rm chain} = \{\cluster_l\}_{l=1}^{N/10}$ via the edge weights $W_{i,j}$. In particular, 
the weights of the edges connecting nodes within the same cluster are chosen i.i.d.\ according to $W_{i,j} \sim |\mathcal{N}(2,1/4)|$ 
(i.e., the absolute value of a Gaussian random variable with mean 2 and variance 1/4). The weights of the edges connecting
 nodes from different clusters are chosen i.i.d.\ according to $W_{i,j} \sim |\mathcal{N}(1,1/4)|$.


We then generate a clustered graph signal $x[\cdot]$ of the form \eqref{equ_def_clustered_signal_model} with coefficients 
$a_{l} \in \{1,5\}$, where the coefficients $a_{l}$ and $a_{l'}$ of consecutive clusters $\cluster_{l}$ and $\cluster_{l'}$ are different. 
The graph signal $x[\cdot]$ is observed via noisy samples $y[i]$ (cf. \eqref{equ_model_initial_labels} with $\noise[i] \sim \mathcal{N}(0,1/4)$) 
obtained for the nodes $i \in \nodes$ belonging to a sampling set $\samplingset$. We consider two different choices for the sampling set, i.e., 
$\samplingset =\samplingset_{1}$ and $\samplingset= \samplingset_{2}$. Both choices contain the same number of nodes, i.e., 
$|\samplingset_1| = |\samplingset_2| = 2 \cdot 10^4$. The sampling set $\samplingset_1$ contains neighbours of cluster 
boundaries $\partial\partition_{\rm chain}$ and conforms to Lemma \ref{lem_suff_cond_NNSP} with constants 
$K=5.39$ and $L=2$ (which have been determined numerically). 
In contrast, the sampling set $\samplingset_2$ is obtained by 
selecting nodes uniformly at random from $\nodes$ and thereby completely ignoring the cluster structure $\partition_{\rm chain}$ of $\graph_{\rm chain}$.

The noisy measurements $y[i]$ are then input to an ADMM implementation for solving the nLasso problem 
\eqref{equ_semi_sup_learning_problem} with $\lambda = 1/K$. We run ADMM for a fixed number of $300$ 
iterations and using ADMM-parameter $\rho = 0.01$ \cite{DistrOptStatistLearningADMM}. In Fig. \ref{fig_resultChain} 
we illustrate the recovered graph signals (over the first $100$ nodes of the chain graph) 
$\hat{x}[\cdot]$, obtained from noisy signal samples over either sampling set $\samplingset_{1}$ or $\samplingset_{2}$. 


As evident from Fig. \ref{fig_resultChain}, the recovered signal obtained when using the sampling set $\samplingset_1$, which takes the partition 
$\partition_{\rm chain}$ into account, better resembles the original graph signal $x[\cdot]$ than when using the randomly selected sampling set 
$\samplingset_{2}$. The favourable performance of $\samplingset_{1}$ is also reflected in the empirical normalized mean 
squared errors (NMSE) between the real and recovered graph signals, which are ${\rm NMSE}_{\samplingset_1}\!=\!3.3\cdot10^{-2}$ and ${\rm NMSE}_{\samplingset_2}\!=\!2.192\cdot 10^{-1}$, respectively.

\begin{figure}
  \centering
  \hspace*{0em}\includegraphics[width=8cm]{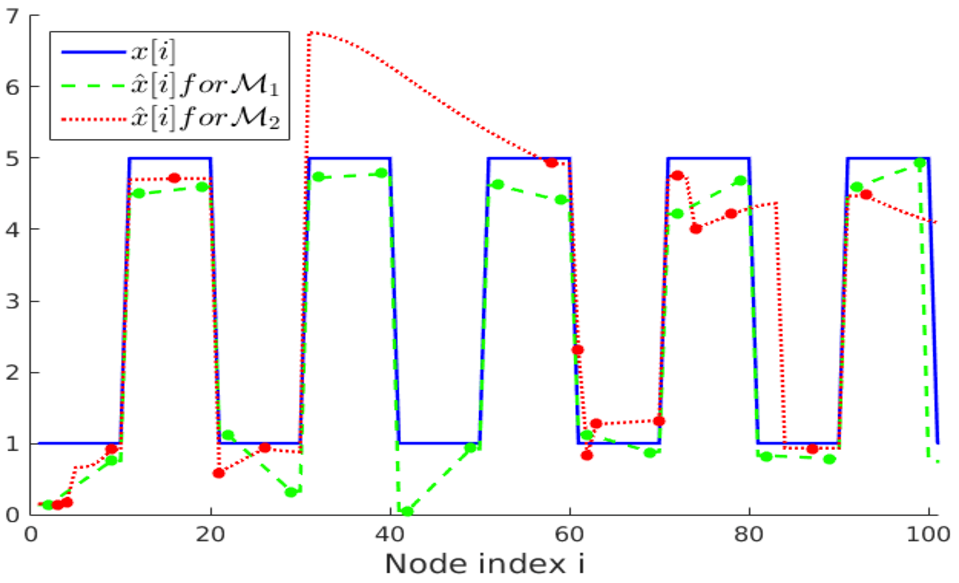}
  \caption{Clustered graph signal $\xsig$ along with the recovered graph signals obtained from sampling set 
  $\samplingset_{1}$ (Lemma \ref{lem_suff_cond_NNSP}) and $\samplingset_{2}$ (random).}
  \label{fig_resultChain}
\end{figure}

We have repeated the above experiment with the same parameters but considering noiseless initial 
samples $y[i]$ for both sampling sets $\samplingset_1$ and $\samplingset_2$. The recovered graph 
signals $\hat{x}[\cdot]$ for the first $100$ nodes of the chain are presented in Fig. \ref{fig_resultChainNoiseless}. 
It can be observed that the recovery starting from the sampling set $\samplingset_1$ (conforming to the partition 
$\partition_{\rm chain}$) perfectly resembles the original graph signal $x[\cdot]$, as expected according to our 
upper bound in \eqref{equ_bound_main_result}. 
The NMSE obtained after running ADMM for $300$ iterations for solving the nLasso problem \eqref{equ_semi_sup_learning_problem} 
are ${\rm NMSE}_{\samplingset_1}\!=\!7.5\cdot10^{-6}$ and ${\rm NMSE}_{\samplingset_2}\!=\!1.475\cdot10^{-1}$, respectively.

\begin{figure}
  \centering
  \hspace*{-1cm}\includegraphics[width=8cm]{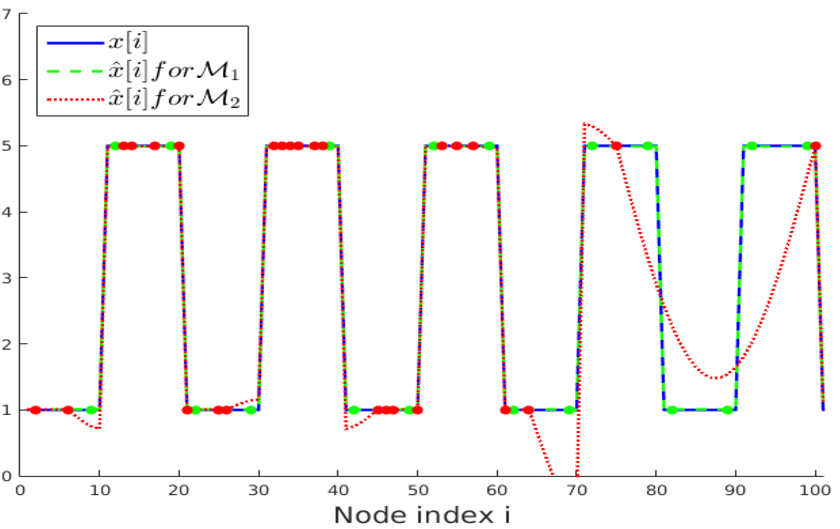} \vspace*{-0.3cm}
  \caption{Clustered graph signal $\xsig$ along with the recovered graph signals obtained from noiseless samples over sampling 
  set $\samplingset_{1}$ (Lemma \ref{lem_suff_cond_NNSP}) and $\samplingset_{2}$ (random). The noiseless signal samples $y[i] = \xsigval{i}$ are marked with dots.}
  \label{fig_resultChainNoiseless}
\end{figure}

\subsection{Complex Network}

In this second experiment, we generate a data graph $\graph_{\rm lfr}$ using the generative model 
introduced by Lancichinetti, Fortunato, and Radicchi \cite{BenchmarkComDet}, in what follows referred to as LFR model. 
The LFR model aims at imitating some key characteristics of real-world networks such as 
power law distributions of node degrees and community sizes. The data graph $\graph_{\rm lfr}$ 
contains a total of $\siglen = 10^5$ nodes which are partitioned into $1399$ clusters, $\partition_{\rm lfr}=\{\cluster_{1},\dots,\cluster_{1399}\}$. 
The nodes $\nodes$ of $\graph_{\rm lfr}$ are connected by a total of $9.45\cdot 10^5$ undirected edges $\edges$.



The edge weights $W_{i,j}$, which are also provided by the LFR model, conform to the cluster structure of $\graph_{\rm lfr}$, i.e., 
inter-cluster edges $\{i,j\} \in \edges$ with $i,j \in \cluster_l$ have larger weights compared to intra-cluster edges $\{i,j\} \in \edges$ with $i \in \cluster_l$ and  $j \in \cluster_{l'}$. 
Given the data graph $\graph_{\rm lfr}$ and partition $\partition_{\rm lfr}$ we generate a clustered graph signal according to \eqref{equ_def_clustered_signal_model} as 
$\xsigval{i} = \sum_{j=1}^{1399} a_{j} \mathcal{I}_{\cluster_{j}}[i]$ with coefficients $a_{j}$ randomly chosen i.i.d.\ according to a uniform distribution $\mathcal{U}(1,50)$. 

We then try to recover the entire graph signal $x[\cdot]$ by solving the nLasso problem \eqref{equ_semi_sup_learning_problem} using noisy measurements 
$y[i]$, according to \eqref{equ_model_initial_labels} with i.i.d.\ measurement noise $\noise[i] \sim \mathcal{N}(0,1/4)$, obtained at the nodes in a sampling set $\samplingset$. 
As in Section \ref{sec_chain_graph}, we consider two different choices  $\samplingset_{1}$ and $\samplingset_{2}$ for the sampling set which both contain the same number of nodes, i.e., 
$|\samplingset_{1}|=|\samplingset_{2}|=10^4$. 
The nodes in sampling set $\samplingset_{1}$ are selected according to Lemma \ref{lem_suff_cond_NNSP}, i.e., by choosing nodes which 
are well connected (close) to boundary edges $\partial \partition_{\rm lfr}$ which connect different clusters of the partition $\partition_{\rm lfr}$. 
In contrast, the sampling set $\samplingset_{2}$ is constructed by selecting nodes uniformly at random, i.e., the partition $\partition_{\rm lfr}$ is 
not taken into account. 
 
In order to construct the sampling set  $\samplingset_{1}$, we first sorted the edges $\{i,j\} \in \edges$ 
of the data graph $\graph_{\rm lfr}$ in ascending order according to their edge weight $W_{i,j}$. We then 
iterate over the the edges according to the list, starting with the edge having smallest weight, and for each edge $\{i,j\} \in \edges$ 
we select the neighbouring nodes of $i$ and $j$ with highest degree and add them to $\samplingset_{1}$, if they are not already included there. 
This process continues until the sampling set $\samplingset_{1}$ has reached the prescribed size of $10^4$. Using Lemma \ref{lem_suff_cond_NNSP}, 
we then verified numerically that the sampling set $\samplingset_{1}$ resolves $\partition_{\rm lfr}$ with constants  $K = 142.6$ and $L = 2$ (cf.\ Definition \ref{def_sampling_set_resolves}). 

The measurements $y[i]$ collected for each sampling sets $\samplingset_{1}$ and $\samplingset_{2}$ are fed into the ADMM 
algorithm (using parameters $\rho = 1/100$) for solving the nLasso problem \eqref{equ_semi_sup_learning_problem} with $\lambda = 1/K$. 
The evolution of the NMSE achieved by the ADMM output for an increasing number the iterations is shown in Fig.\ \ref{fig_nmseLfr}. 
According to Fig. \ref{fig_nmseLfr} the signal recovered from the sampling set $\samplingset_{1}$ approximates the true graph signal $\xsig$ 
more closely compared to when using the sampling set $\samplingset_{2}$. The NMSE achieved after $300$ iterations of ADMM 
is ${\rm NMSE}_{\samplingset_1}\!=\!1.56\cdot10^{-2}$ and ${\rm NMSE}_{\samplingset_2}\!=\!4.25\cdot10^{-2}$, respectively.

Finally, we compare the recovery accuracy of nLasso to that of plain label propagation (LP) \cite{Zhu02learningfrom}, 
which relies on a band-limited signal model (cf.\ Section \ref{sec_graph_signals}). In particular, LP quantifies signal smoothness by the Laplacian quadratic 
form \eqref{equ_Lapl_quadratic_form} instead of the total variation \eqref{equ_def_TV}, which underlies nLasso \eqref{equ_semi_sup_learning_problem}. 
The signals recovered after running the LP algorithm for $300$ iterations for the two sampling sets $\samplingset_{1}$ and $\samplingset_{2}$ incur an 
NMSE of ${\rm NMSE}_{\samplingset_1}\!=\!3.1\cdot10^{-2}$ and ${\rm NMSE}_{\samplingset_2}\!=\!7.43\cdot10^{-2}$, respectively. Thus, the signals recovered 
using nLasso are more accurate compared to LP, as illustrated in Fig. \ref{fig_resultsLfr}. However, our results indicate that LP also benefits by using the sampling set 
$\samplingset_{1}$ whose construction is guided by our theoretical findings (cf.\ Lemma \ref{lem_suff_cond_NNSP}). 

\begin{figure}
  \hspace*{0cm}\includegraphics[width=1.1\linewidth, height=6.7cm]{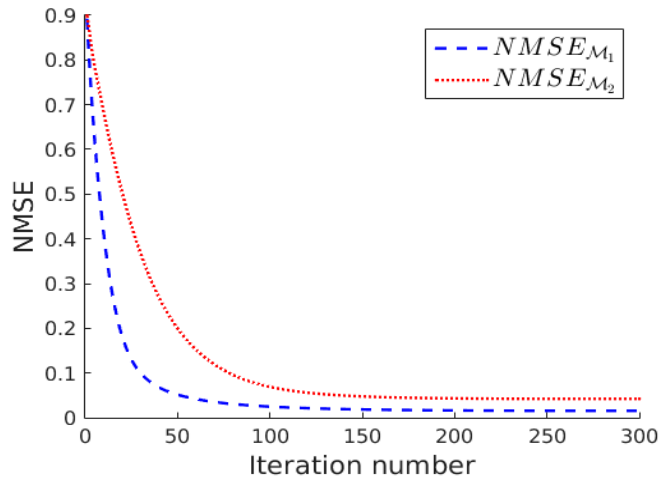} \vspace*{-0.3cm}
  \caption{Evolution of the NMSE achieved by increasing number of nLasso-ADMM iterations when using 
  sampling set $\mathcal{M}_{1}$ or $\mathcal{M}_{2}$, respectively.}
  \label{fig_nmseLfr}
\end{figure}

\begin{figure}
\centering
  \hspace*{0cm}\includegraphics[width=1.1\linewidth]{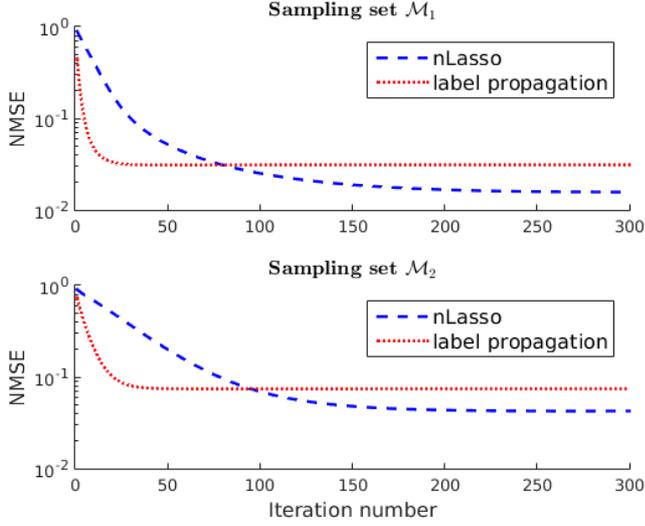} \vspace*{-0.3cm}
  \caption{Evolution of the NMSE achieved by increasing number of nLasso-ADMM iterations and LP iterations. 
  Both algorithms are fed with the signal samples obtained over sampling set $\mathcal{M}_{1}$ and $\mathcal{M}_{2}$, respectively.}
  \label{fig_resultsLfr}
\end{figure}

\section{Proofs}
\label{sec_proofs} 

The high-level idea behind the proof of Theorem \ref{main_thm_exact_sparse} 
is to adapt the concept of compatibility conditions for Lasso type estimators \cite{BuhlGeerBook}.   
This concept has been championed for analyzing Lasso type methods \cite{BuhlGeerBook}. 
Our main technical contribution is to verify the compatibility condition for a sampling set $\samplingset$ 
which resolves the partition $\partition$ underlying the signal model 
\eqref{equ_def_clustered_signal_model}  (cf.\ Lemma \ref{lem_NNSP_samplingset_suff_recovery} below).

\subsection{The Network Compatibility Condition}
As an intermediate step towards proving Theorem \ref{main_thm_exact_sparse}, 
we adopt the compatibility condition \cite{GeerBuhlConditions}, which has been introduced to analyze Lasso methods for learning sparse signals, 
to the clustered graph signal model \eqref{equ_def_clustered_signal_model}. In particular, we define the network compatibility condition for sampling graph signals 
with small total variation (cf.\ \eqref{equ_def_TV}).  
\begin{definition} 
\label{def_NNSP}
Consider a data graph $\graph=(\nodes,\edges,\mW)$ whose nodes $\nodes$ are partitioned into 
disjoint clusters $\partition =\{\cluster_{1},\ldots,\cluster_{|\partition|}\}$. 
A sampling set $\samplingset \subseteq \nodes$ is said to satisfy the network 
compatibility condition, with constants $K,L>0$, if 
\begin{equation}
\label{equ_ineq_multcompcondition_condition}
K \sum_{i \in \samplingset} |\vz[i]| +  \| \vz[\cdot] \|_{\edges \setminus \partial \partition} \geq L  \| \vz[\cdot] \|_{\partial \partition} 
\end{equation} 
for any graph signal $\vz[\cdot] \in \graphsigs$. 
\end{definition} 

It turns out that any sampling set $\samplingset$ which resolves the partition $\partition=\{\cluster_{1},\ldots,\cluster_{|\partition|}\}$ with 
constants $K$ and $L$ (cf.\ Definition \ref{def_sampling_set_resolves}) also satisfies the network compatibility condition \eqref{equ_ineq_multcompcondition_condition} 
with the same constants. 
\begin{lemma} 
\label{lem_NNSP_samplingset_suff_recovery}
Any sampling set $\samplingset$ which resolves the partition $\partition$ with parameters $K,L>0$ 
satisfies the network compatibility condition with parameters $K, L$.
\end{lemma}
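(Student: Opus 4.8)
The plan is to derive a discrete ``integration-by-parts'' identity that links the demands of a flow to the signal differences across the edges, and then to exploit the freedom in the boundary orientations $b_{i,j}$ granted by Definition~\ref{def_sampling_set_resolves} so that the boundary edges contribute exactly $L \| z[\cdot] \|_{\boundary}$.

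Fix an arbitrary graph signal $z[\cdot] \in \graphsigs$. The first step is to invoke the resolving property of $\samplingset$, but with a choice of the binary variables $b_{i,j}$ tailored to $z[\cdot]$: set $b_{i,j} = 1$ whenever $z[j] \ge z[i]$ and $b_{i,j} = 0$ otherwise, for each boundary edge $\{i,j\} \in \boundary$. Definition~\ref{def_sampling_set_resolves} then furnishes a flow $\flow[\cdot]$ with demands $d[\cdot]$ obeying \eqref{equ_bound_demand_K}, with the prescribed boundary values \eqref{equ_resol_set_flow_cond}, and satisfying the capacity constraints \eqref{equ_cap_constraint} on every interior edge.

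The next step is the key identity. Multiplying the conservation law \eqref{equ_conservation_flow} by $z[i]$ and summing over all $i \in \nodes$, then reorganizing the double sum edge-by-edge (each undirected edge $\{i,j\}$ collects the two terms coming from the conservation equations at its endpoints), yields
\begin{equation}
\label{equ_flow_identity}
\sum_{i \in \nodes} z[i] \, d[i] = \sum_{\{i,j\} \in \edges} \big( \flow(i,j) - \flow(j,i) \big)\big( z[j] - z[i] \big).
\end{equation}
I would then split the edge sum on the right-hand side into boundary edges $\boundary$ and interior edges $\edges \setminus \boundary$. For a boundary edge the prescribed values \eqref{equ_resol_set_flow_cond} give $\flow(i,j) - \flow(j,i) = (2 b_{i,j} - 1) L W_{i,j}$, and the sign choice above makes $(2 b_{i,j} - 1)(z[j] - z[i]) = | z[j] - z[i] |$; hence the boundary edges contribute exactly $L \sum_{\{i,j\} \in \boundary} W_{i,j} | z[j] - z[i] | = L \| z[\cdot] \|_{\boundary}$.

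Finally I would rearrange \eqref{equ_flow_identity} to isolate $L \| z[\cdot] \|_{\boundary}$ and bound the two remaining terms. The demand term is controlled by \eqref{equ_bound_demand_K}, since $d[i] = 0$ off the sampling set and $|d[i]| \le K$ on it, giving $\big| \sum_{i \in \nodes} z[i] d[i] \big| \le K \sum_{i \in \samplingset} | z[i] |$. The interior contribution is controlled by the capacity constraint \eqref{equ_cap_constraint}, which bounds $| \flow(i,j) - \flow(j,i) | \le W_{i,j}$ on each edge of $\edges \setminus \boundary$, so that this sum is at most $\| z[\cdot] \|_{\edges \setminus \boundary}$ in absolute value. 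Combining the three estimates reproduces the network compatibility inequality \eqref{equ_ineq_multcompcondition_condition}. I expect the main obstacle to be the second step: recognizing that the universal quantifier over $b_{i,j}$ in Definition~\ref{def_sampling_set_resolves} may be instantiated with the data-dependent signs of $z[j] - z[i]$, which is exactly what converts the abstract flow-existence statement into the desired norm inequality; the identity \eqref{equ_flow_identity} and the two bounds are then routine.
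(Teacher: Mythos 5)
Your proposal is correct and takes essentially the same route as the paper's proof: both instantiate the boundary orientations $b_{i,j}$ according to the sign of the signal difference across each boundary edge, apply the summation-by-parts identity linking $\sum_{i \in \nodes} z[i]\,d[i]$ to the edge-wise products of net flow and signal difference, and then bound the demand term by $K\sum_{i\in\samplingset}|z[i]|$ and the intra-cluster term by $\|z[\cdot]\|_{\edges\setminus\boundary}$ via the capacity constraints. The only difference is presentational — you make explicit the data-dependent choice of $b_{i,j}$ that the paper leaves implicit when asserting its identity \eqref{equ_crucial_ident}.
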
 
\begin{proof} 
Let us consider an arbitrary but fixed graph signal $\vz[\cdot] \in \graphsigs$. 
Since the sampling set $\samplingset$ resolves the partition $\partition$ there exists a flow $\flow[e]$ on $\graph$ with (cf.\ Definition \ref{def_sampling_set_resolves})
\begin{align} 
\label{equ_condition_flow}
&  \hspace*{0mm} \sum_{j \in \mathcal{N}(i)} \flow(j,i) - \sum_{j \in \mathcal{N}(i)} \flow(i,j) \!=\!  0  \mbox{ for all } i \notin \samplingset \nonumber \\
&  \hspace*{0mm}  \bigg| \sum_{j \in \mathcal{N}(i)} \flow(j,i) - \sum_{j \in \mathcal{N}(i)} \flow(i,j) \bigg| \!\leq\!  K  \mbox{ for all } i \in \samplingset \nonumber \\
& |\flow(i,j)| \leq W_{i,j}   \mbox{ for } (i,j)\!\notin\!\partial \partition \nonumber \\ 
   & \flow(i,j) \cdot \flow(j,i) = 0  \mbox{ for } \{ i,j \} \!\in\!\partial \partition   
\end{align}
Moreover, due to \eqref{equ_resol_set_flow_cond}, we have the important identity 
\begin{equation} 
\label{equ_crucial_ident}
 ( \flow(i,j)\!-\!\flow(j,i))(z[i]\!-\!z[j])\!=\! L W_{i,j} |z[i]\!-\!z[j]| 
\end{equation} 
which holds for all boundary edges $\{i,j\} \!\in\!\partial \partition$.
This yields, in turn, 
\begin{align}
\label{equ_inequ_nnsp_single_cluster_1}
L \| z[\cdot] \|_{\partial \partition} & \stackrel{\eqref{equ_def_shorthand_TV}}{=} \sum_{\{i,j\}  \in \partial \partition} |z[i] - z[j]| L W_{i,j} \nonumber \\ 
 & \stackrel{\eqref{equ_crucial_ident}}{=} \sum_{ (i,j)   \in \partial \partition} (z[i]\!-\!z[j])\flow(i,j). 
\end{align} 
Since $\edges\!=\!\partial \partition\!\cup\!\big( \edges\!\setminus\!\partial \partition \big)$, we can develop \eqref{equ_inequ_nnsp_single_cluster_1} as 
\begin{align}
\label{equ_inequ_nnsp_single_cluster}
L \| z[\cdot] \|_{\partial \partition} & \nonumber \\ 
& \hspace*{-15mm}  = \sum_{(i,j) \in \edges} (z[i]\!-\!z[j])\flow(i,j)\!-\!\sum_{(i,j) \in \edges \setminus \partial \partition} (z[i]\!-\!z[j])\flow(i,j) \nonumber \\ 
& \hspace*{-15mm} = \sum_{i  \in \nodes} z[i] \sum_{j \in \mathcal{N}(i)} (\flow(j,i) - \flow(i,j)) \nonumber \\ 
& \!-\!\sum_{(i,j)   \in \edges \setminus \partial \partition} (z[i]\!-\!z[j])\flow(i,j)\nonumber \\ 
& \hspace*{-15mm} \stackrel{\eqref{equ_condition_flow}}{\leq}  K  \sum_{i  \in \samplingset}  |z[i]| +  \sum_{\{i,j\} \in \edges \setminus \partial \partition} | z[i]\!-\!z[j]| W_{i,j} \nonumber \\ 
& \hspace*{-15mm} =  K  \sum_{i  \in \samplingset}  |z[i]| +    \| z[\cdot] \|_{\edges \setminus \partial \partition} 
\vspace*{-3mm}
\end{align}
which verifies \eqref{equ_ineq_multcompcondition_condition}. 
\end{proof} 

The next result shows that if the sampling set satisfies the network compatibility condition, any 
solution of the nLasso \eqref{equ_semi_sup_learning_problem} allows to accurately recover a 
clustered graph signal (cf. \eqref{equ_def_clustered_signal_model}). 
\begin{lemma} 
\label{lem_NSP1}
Consider a clustered graph signal $\xsig$ of the form \eqref{equ_def_clustered_signal_model} defined on the data graph 
$\graph=(\nodes,\edges,\mathbf{W})$ whose nodes $\nodes$ are partitioned into the clusters $\partition=\{\cluster_{1},\ldots,\cluster_{|\partition|}\}$. We observe the noisy signal values 
$y[i]$ at the sampled nodes $\samplingset \!\subseteq\!\nodes$ (cf.\ \eqref{equ_model_initial_labels}).  
If the sampling set $\samplingset$ satisfies the network compatibility condition with constants $L>1,K > 0$, then 
any solution of the nLasso problem \eqref{equ_semi_sup_learning_problem}, for the choice $\lambda \defeq 1/K$, satisfies  
\begin{equation}
 \| \hat{x}[\cdot]-\xsig \|_{\rm TV} \!\leq\! (K\!+\!4/(L\!-\!1))  \sum_{i \in \samplingset} |\noise[i]| .
\end{equation} 
\end{lemma}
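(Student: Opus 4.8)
The plan is to run the standard basic-inequality argument for Lasso-type estimators, but with the network compatibility condition \eqref{equ_ineq_multcompcondition_condition} playing the role of the usual compatibility factor. Write $\hat{z}[\cdot] \defeq \estxsig - \xsig$ for the recovery error and $r \defeq \sum_{i \in \samplingset} |\noise[i]|$ for the total sampled noise. The starting point is the optimality of $\estxsig$ for \eqref{equ_semi_sup_learning_problem}: since the true signal $\xsig$ is a feasible competitor, $\widehat{E}(\estxsig) + \lambda \| \estxsig \|_{\rm TV} \leq \widehat{E}(\xsig) + \lambda \| \xsig \|_{\rm TV}$. Two simplifications feed into this. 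First, the observation model \eqref{equ_model_initial_labels} gives $\widehat{E}(\xsig) = \sum_{i \in \samplingset} |\xsigval{i} - y[i]| = r$. Second, since $\xsig$ is of the clustered form \eqref{equ_def_clustered_signal_model} it is constant on each cluster, so its total variation is supported on the boundary, i.e.\ $\| \xsig \|_{\rm TV} = \| \xsig \|_{\boundary}$.

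Next I would turn the optimality inequality into a cone condition on $\hat{z}[\cdot]$. Splitting the edge set as $\edges = \boundary \cup (\edges \setminus \boundary)$ and using $\xsigval{i} = \xsigval{j}$ on every intra-cluster edge $\{i,j\} \in \edges \setminus \boundary$, the intra-cluster part of $\| \estxsig \|_{\rm TV}$ equals $\| \hat{z}[\cdot] \|_{\edges \setminus \boundary}$, while on the boundary edges the reverse triangle inequality gives $\| \estxsig \|_{\boundary} \geq \| \xsig \|_{\boundary} - \| \hat{z}[\cdot] \|_{\boundary}$. Substituting these and cancelling the common term $\lambda \| \xsig \|_{\boundary}$ leaves
\begin{equation}
\label{equ_cone_sketch}
\widehat{E}(\estxsig) + \lambda \| \hat{z}[\cdot] \|_{\edges \setminus \boundary} \leq r + \lambda \| \hat{z}[\cdot] \|_{\boundary}.
\end{equation}

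The remaining work is bookkeeping of the sampled noise followed by one application of the compatibility condition. Writing $\widehat{E}(\estxsig) = \sum_{i \in \samplingset} |\hat{z}[i] - \noise[i]|$ and using the triangle inequality gives $\sum_{i \in \samplingset} |\hat{z}[i]| \leq \widehat{E}(\estxsig) + r$. Inserting this together with the choice $\lambda = 1/K$ into \eqref{equ_cone_sketch} and clearing denominators, I obtain $K \sum_{i \in \samplingset} |\hat{z}[i]| + \| \hat{z}[\cdot] \|_{\edges \setminus \boundary} \leq 2Kr + \| \hat{z}[\cdot] \|_{\boundary}$. The left-hand side is exactly the quantity bounded from below by the network compatibility condition \eqref{equ_ineq_multcompcondition_condition} applied to the error signal $\hat{z}[\cdot] \in \graphsigs$, so $L \| \hat{z}[\cdot] \|_{\boundary} \leq 2Kr + \| \hat{z}[\cdot] \|_{\boundary}$, and since $L > 1$ this rearranges to a bound on $\| \hat{z}[\cdot] \|_{\boundary}$ of order $Kr/(L-1)$. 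Finally, dropping the nonnegative fidelity term in \eqref{equ_cone_sketch} bounds the intra-cluster part by $\| \hat{z}[\cdot] \|_{\edges \setminus \boundary} \leq Kr + \| \hat{z}[\cdot] \|_{\boundary}$; adding the two contributions, $\| \hat{z}[\cdot] \|_{\rm TV} = \| \hat{z}[\cdot] \|_{\boundary} + \| \hat{z}[\cdot] \|_{\edges \setminus \boundary}$, and substituting the boundary estimate yields the asserted bound.

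I expect the main obstacle to be the correct handling of the $\ell_1$ data-fidelity term in \eqref{equ_cone_sketch} and the associated noise bookkeeping. Because the loss $\widehat{E}$ is not strongly convex (unlike the squared loss of ordinary Lasso), the entire estimate must be extracted by pairing the total-variation regularizer against the compatibility condition, and it is essential that the condition be invoked on the \emph{full} error $\hat{z}[\cdot]$, which itself need not be clustered, rather than on $\estxsig$ or $\xsig$. The genuinely structural difficulty, namely routing flow from the sampled nodes to every boundary edge, has already been discharged into the compatibility condition through Lemma \ref{lem_NNSP_samplingset_suff_recovery}, so what remains here is essentially careful constant tracking together with the exploitation of $L > 1$.
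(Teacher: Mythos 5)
Your argument is essentially the paper's own proof: the basic optimality inequality, the split of the total variation into its boundary and intra-cluster parts using that $\xsig$ is constant within each cluster, the triangle-inequality bookkeeping of the sampled noise, and a single application of the network compatibility condition to the error signal $\hat{z}[\cdot]$ all occur in the same order and with the same intermediate estimates (your $\|\hat{z}[\cdot]\|_{\boundary}\le 2K\sum_{i\in\samplingset}|\noise[i]|/(L-1)$ is exactly the paper's \eqref{equ_upper_bound_partial_partition} with $\lambda=1/K$). One caveat on the last line: combining your two estimates actually gives $\|\hat{z}[\cdot]\|_{\rm TV}\le K\bigl(1+4/(L-1)\bigr)\sum_{i\in\samplingset}|\noise[i]|$, i.e.\ the constant $K+4K/(L-1)$ rather than the asserted $K+4/(L-1)$; the paper's own final display contains the same slip (its factor $4\lambda/(L-1)$ in the numerator should be $4/(L-1)$), so the two agree only when $K=1$, and your closing claim that the substitution ``yields the asserted bound'' is exactly as imprecise as the paper's.
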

\begin{proof}
Consider a solution $\hat{x}[\cdot]$ of the nLasso problem \eqref{equ_semi_sup_learning_problem} which 
is different from the true underlying clustered signal $\xsig$ (cf.\ \eqref{equ_def_clustered_signal_model}). 
We must have (cf. \eqref{equ_model_initial_labels})
\begin{equation}
\label{equ_inequ_basic_1}
\sum_{i \in \samplingset} \hspace*{-1mm} |\hat{x}[i]\!-\!y[i]|\!+\!\lambda \| \hat{x}[\cdot] \|_{\rm TV}\!\leq\!\sum_{i \in \samplingset} \hspace*{-1mm}  |\noise[i]|\!+\!\lambda \| \xsig \|_{\rm TV}
\end{equation} 
since otherwise the true underlying signal $\xsig$ would achieve a smaller objective value in 
\eqref{equ_semi_sup_learning_problem} which, in turn, would contradict the premise 
that $\hat{x}[\cdot]$ is optimal for the problem \eqref{equ_semi_sup_learning_problem}. 

Let us denote the difference between the solution $\hat{x}[\cdot]$ of \eqref{equ_semi_sup_learning_problem} and the 
true underlying clustered signal $\xsig$ by $\tilde{x}[\cdot] \defeq \hat{x}[\cdot] - \xsig$. Since $\xsig$ 
satisfies \eqref{equ_def_clustered_signal_model}, 
\begin{equation} 
\label{equ_identities_vx_edges}
\|\xsig\|_{\edges \setminus \partial \partition} = 0 \mbox{, and } 
\| \tilde{x}[\cdot] \|_{\edges \setminus \partial \partition} = \| \hat{x}[\cdot] \|_{\edges \setminus \partial \partition}.
\end{equation} 
Applying the decomposition property of the semi-norm 
$\| \cdot \|_{\rm TV}$ to \eqref{equ_inequ_basic_1} yields 
\begin{align}
\label{equ_upper_bound_complement_partition_1}
\sum_{i \in \samplingset} &|\hat{x}[i]- y[i]| + \lambda \| \hat{x}[\cdot] \|_{\edges \setminus \partial \partition} \nonumber \\ 
&\leq \sum_{i \in \samplingset} |\noise[i]| + \lambda \| \xsig \|_{\partial \partition}  - \lambda \| \hat{x}[\cdot] \|_{ \partial \partition}.
\end{align}
Therefore, using \eqref{equ_identities_vx_edges} and the triangle inequality, 
\begin{align}
\label{equ_upper_bound_complement_partition_2}
\sum_{i \in \samplingset} |\hat{x}[i]- y[i]| &+ \lambda \| \tilde{x}[\cdot] \|_{\edges \setminus \partial \partition} \nonumber\\
& \leq \lambda \| \tilde{x}[\cdot] \|_{\partial \partition}  +  \sum_{i \in \samplingset} |\noise[i]|.
\end{align}
Since $\sum_{i \in \samplingset} |\hat{x}[i]- y[i]| \geq 0 $, \eqref{equ_upper_bound_complement_partition_2} yields
\begin{align}
\label{equ_upper_bound_complement_partition}
\lambda \| \tilde{x}[\cdot] \|_{\edges \setminus \partial \partition}  \leq \lambda \| \tilde{x}[\cdot] \|_{\partial \partition}  +  \sum_{i \in \samplingset} |\noise[i]|,
\end{align}
i.e., for sufficiently small measurement noise $\noise[i]$, the signal differences of the recovery error $\tilde{x}[\cdot]\!=\!\hat{x}[\cdot]\!-\!\xsig$ 
cannot be concentrated across the edges within the clusters $\cluster_{l}$. 
Moreover, using
\begin{align}
\label{equ_upper_bound_complement_partition_3}
\sum_{i \in \samplingset} |\hat{x}[i]- y[i]| &\stackrel{\eqref{equ_model_initial_labels}}{=} \sum_{i \in \samplingset} |\hat{x}[i]- x[i] - \noise[i]| \nonumber \\
&\geq \sum_{i \in \samplingset} |\tilde{x}[i]| - \sum_{i \in \samplingset} |\noise[i]|, 
\end{align}
the inequality \eqref{equ_upper_bound_complement_partition_2} becomes 
\begin{align}
\label{equ_upper_bound_complement_partition_4}
\sum_{i \in \samplingset} |\tilde{x}[i]| + \lambda \| \tilde{x}[\cdot] \|_{\edges \setminus \partial \partition}  \leq \lambda \| \tilde{x}[\cdot] \|_{\partial \partition}  +  2 \sum_{i \in \samplingset} |\noise[i]|.
\end{align}

Thus, since the sampling set $\samplingset$ satisfies the 
network compatibility condition, we can apply \eqref{equ_ineq_multcompcondition_condition} to $\tilde{\vx}[\cdot]$ yielding 
\begin{equation} 
\label{equ_inequ_diff_signal}
\sum_{i \in \samplingset} |\tilde{x}[i] | + (1/K) \| \tilde{x}[\cdot] \|_{\edges \setminus \partial \partition} \geq (1/K)L \|\tilde{x}[\cdot]  \|_{\partial \partition}. 
\end{equation} 
Inserting \eqref{equ_inequ_diff_signal} into \eqref{equ_upper_bound_complement_partition_4}, with $\lambda = 1/K$, yields 
\begin{equation}
\label{equ_upper_bound_partial_partition}
\lambda(L-1)\| \tilde{x}[\cdot] \|_{\partial \partition} \leq 2  \sum_{i \in \samplingset} |\noise[i]|.
\vspace*{-2mm}
\end{equation}  
Combining \eqref{equ_upper_bound_complement_partition} and \eqref{equ_upper_bound_partial_partition} yields
\begin{align} 
 \| \tilde{x}[\cdot] \|_{\rm TV} & \!=\!\| \tilde{x}[\cdot] \|_{\edges \setminus \partial \partition}\!+\!\| \tilde{x}[\cdot] \|_{\partial \partition} \hspace*{-1mm} \nonumber \\
 &\hspace{-1mm} \stackrel{\eqref{equ_upper_bound_complement_partition}}{\leq}  2 \| \tilde{x}[\cdot] \|_{\partial \partition}  + (1/\lambda) \sum_{i \in \samplingset} |\noise[i]|  \nonumber \\
 &\hspace{-1mm} \stackrel{\eqref{equ_upper_bound_partial_partition}}{\leq}   \hspace*{-1mm} \frac{1\!+\!4\lambda/(L\!-\!1)}{\lambda} \hspace*{-1mm} \sum_{i \in \samplingset}\hspace*{-1mm} |\noise[i]|.
\end{align} 
\end{proof} 

\subsection{Proof of Theorem \ref{main_thm_exact_sparse}} 
\label{sec_proof_thm1}
Combine Lemma \ref{lem_NNSP_samplingset_suff_recovery} with Lemma \ref{lem_NSP1}.


\section{Conclusions}
\label{sec5_conclusion}
Given a known cluster structure of the data graph, we introduced the notion of resolving sampling sets. 
A sampling set resolves a cluster structure if there exists a sufficiently large network flow between the sampled nodes, 
with prescribed flow values over boundary edges which connect different clusters.  
Loosely speaking, this requires to choose the sampling set mainly in the boundary regions between 
different clusters in the data graph. Thus, we can leverage efficient clustering methods for identifying the 
cluster boundary regions in order to find sampling sets which resolve the intrinsic cluster structure of the network structure 
underlying a dataset. 

The verification if a particular sampling set resolves a given partition 
requires to consider all possible sign patterns for the 
boundary edges, which is intractable for large graphs. An important avenue for follow-up work is the investigation 
if resolving sampling sets can be characterized easily using probabilistic models for the underlying network structure 
and sampling sets. Moreover, we plan to extend our analysis to nLasso methods using other loss functions, e.g., 
the squared error loss and also the logistic loss function in the context of classification problems. 

\section*{Acknowledgement}
The authors are grateful to Madelon Hulsebos for a careful proof-reading of an early manuscript. 
Moreover, the constructive comments of anonymous reviewers are appreciated sincerely. 
This manuscript is available as a pre-print at the following address: \url{https://arxiv.org/abs/1704.02107}.
Copyright of this pre-print version rests with the authors

\bibliographystyle{abbrv}
\bibliography{SLPbib}

\end{document}